\newcommand{\eg}{{\emph{e.g.}}\xspace}
\newcommand{\vs}{{\emph{v.s.}}\xspace}
\newcommand{\xmark}{\ding{55}\xspace}%
\theoremstyle{plain}
\newtheorem{theorem}{Theorem}[section]
\newtheorem{lemma}[theorem]{Lemma}
\newtheorem{corollary}[theorem]{Corollary}
\theoremstyle{definition}
\theoremstyle{remark}
\begin{document}

\title{Feature-Space Smoothing: Certified Robustness of Deep Representations}

\author{
  Song Xia, Meiwen Ding, Chenqi Kong, Wenhan Yang, \textit{Member, IEEE}, Xudong Jiang, \textit{Fellow, IEEE}
  
\IEEEcompsocitemizethanks{
        \IEEEcompsocthanksitem Song Xia, Meiwen Ding, Chenqi Kong, and Xudong Jiang are with the Rapid-Rich Object Search Lab, School of Electrical and Electronic Engineering, Nanyang Technological University, Singapore, (e-mail: \{xias0002, ding0159, chenqi.kong, exdjiang\}@ntu.edu.sg).
        \IEEEcompsocthanksitem Wenhan Yang is with Pengcheng Laboratory, Shenzhen, China, (e-mail:  yangwh@pcl.ac.cn).
	}
}

\markboth{Journal of \LaTeX\ Class Files,~Vol.~14, No.~8, August~2021}%
{Shell \MakeLowercase{\textit{et al.}}: A Sample Article Using IEEEtran.cls for IEEE Journals}

\IEEEpubid{0000--0000/00\$00.00~\copyright~2021 IEEE}

\maketitle


\newcommand{\figleft}{{\em (Left)}}
\newcommand{\figcenter}{{\em (Center)}}
\newcommand{\figright}{{\em (Right)}}
\newcommand{\figtop}{{\em (Top)}}
\newcommand{\figbottom}{{\em (Bottom)}}
\newcommand{\captiona}{{\em (a)}}
\newcommand{\captionb}{{\em (b)}}
\newcommand{\captionc}{{\em (c)}}
\newcommand{\captiond}{{\em (d)}}

\newcommand{\newterm}[1]{{\bf #1}}

\def\figref#1{figure~\ref{#1}}
\def\Figref#1{Figure~\ref{#1}}
\def\twofigref#1#2{figures \ref{#1} and \ref{#2}}
\def\quadfigref#1#2#3#4{figures \ref{#1}, \ref{#2}, \ref{#3} and \ref{#4}}
\def\secref#1{section~\ref{#1}}
\def\Secref#1{Section~\ref{#1}}
\def\twosecrefs#1#2{sections \ref{#1} and \ref{#2}}
\def\secrefs#1#2#3{sections \ref{#1}, \ref{#2} and \ref{#3}}
\def\eqref#1{equation~\ref{#1}}
\def\Eqref#1{Equation~\ref{#1}}
\def\plaineqref#1{\ref{#1}}
\def\chapref#1{chapter~\ref{#1}}
\def\Chapref#1{Chapter~\ref{#1}}
\def\rangechapref#1#2{chapters\ref{#1}--\ref{#2}}
\def\algref#1{algorithm~\ref{#1}}
\def\Algref#1{Algorithm~\ref{#1}}
\def\twoalgref#1#2{algorithms \ref{#1} and \ref{#2}}
\def\Twoalgref#1#2{Algorithms \ref{#1} and \ref{#2}}
\def\partref#1{part~\ref{#1}}
\def\Partref#1{Part~\ref{#1}}
\def\twopartref#1#2{parts \ref{#1} and \ref{#2}}

\def\ceil#1{\lceil #1 \rceil}
\def\floor#1{\lfloor #1 \rfloor}
\def\1{\bm{1}}
\newcommand{\train}{\mathcal{D}}
\newcommand{\valid}{\mathcal{D_{\mathrm{valid}}}}
\newcommand{\test}{\mathcal{D_{\mathrm{test}}}}

\def\eps{{\epsilon}}

\def\reta{{\textnormal{$\eta$}}}
\def\ra{{\textnormal{a}}}
\def\rb{{\textnormal{b}}}
\def\rc{{\textnormal{c}}}
\def\rd{{\textnormal{d}}}
\def\re{{\textnormal{e}}}
\def\rf{{\textnormal{f}}}
\def\rg{{\textnormal{g}}}
\def\rh{{\textnormal{h}}}
\def\ri{{\textnormal{i}}}
\def\rj{{\textnormal{j}}}
\def\rk{{\textnormal{k}}}
\def\rl{{\textnormal{l}}}
\def\rn{{\textnormal{n}}}
\def\ro{{\textnormal{o}}}
\def\rp{{\textnormal{p}}}
\def\rq{{\textnormal{q}}}
\def\rr{{\textnormal{r}}}
\def\rs{{\textnormal{s}}}
\def\rt{{\textnormal{t}}}
\def\ru{{\textnormal{u}}}
\def\rv{{\textnormal{v}}}
\def\rw{{\textnormal{w}}}
\def\rx{{\textnormal{x}}}
\def\ry{{\textnormal{y}}}
\def\rz{{\textnormal{z}}}

\def\rvepsilon{{\mathbf{\epsilon}}}
\def\rvtheta{{\mathbf{\theta}}}
\def\rva{{\mathbf{a}}}
\def\rvb{{\mathbf{b}}}
\def\rvc{{\mathbf{c}}}
\def\rvd{{\mathbf{d}}}
\def\rve{{\mathbf{e}}}
\def\rvf{{\mathbf{f}}}
\def\rvg{{\mathbf{g}}}
\def\rvh{{\mathbf{h}}}
\def\rvu{{\mathbf{i}}}
\def\rvj{{\mathbf{j}}}
\def\rvk{{\mathbf{k}}}
\def\rvl{{\mathbf{l}}}
\def\rvm{{\mathbf{m}}}
\def\rvn{{\mathbf{n}}}
\def\rvo{{\mathbf{o}}}
\def\rvp{{\mathbf{p}}}
\def\rvq{{\mathbf{q}}}
\def\rvr{{\mathbf{r}}}
\def\rvs{{\mathbf{s}}}
\def\rvt{{\mathbf{t}}}
\def\rvu{{\mathbf{u}}}
\def\rvv{{\mathbf{v}}}
\def\rvw{{\mathbf{w}}}
\def\rvx{{\mathbf{x}}}
\def\rvy{{\mathbf{y}}}
\def\rvz{{\mathbf{z}}}

\def\erva{{\textnormal{a}}}
\def\ervb{{\textnormal{b}}}
\def\ervc{{\textnormal{c}}}
\def\ervd{{\textnormal{d}}}
\def\erve{{\textnormal{e}}}
\def\ervf{{\textnormal{f}}}
\def\ervg{{\textnormal{g}}}
\def\ervh{{\textnormal{h}}}
\def\ervi{{\textnormal{i}}}
\def\ervj{{\textnormal{j}}}
\def\ervk{{\textnormal{k}}}
\def\ervl{{\textnormal{l}}}
\def\ervm{{\textnormal{m}}}
\def\ervn{{\textnormal{n}}}
\def\ervo{{\textnormal{o}}}
\def\ervp{{\textnormal{p}}}
\def\ervq{{\textnormal{q}}}
\def\ervr{{\textnormal{r}}}
\def\ervs{{\textnormal{s}}}
\def\ervt{{\textnormal{t}}}
\def\ervu{{\textnormal{u}}}
\def\ervv{{\textnormal{v}}}
\def\ervw{{\textnormal{w}}}
\def\ervx{{\textnormal{x}}}
\def\ervy{{\textnormal{y}}}
\def\ervz{{\textnormal{z}}}

\def\rmA{{\mathbf{A}}}
\def\rmB{{\mathbf{B}}}
\def\rmC{{\mathbf{C}}}
\def\rmD{{\mathbf{D}}}
\def\rmE{{\mathbf{E}}}
\def\rmF{{\mathbf{F}}}
\def\rmG{{\mathbf{G}}}
\def\rmH{{\mathbf{H}}}
\def\rmI{{\mathbf{I}}}
\def\rmJ{{\mathbf{J}}}
\def\rmK{{\mathbf{K}}}
\def\rmL{{\mathbf{L}}}
\def\rmM{{\mathbf{M}}}
\def\rmN{{\mathbf{N}}}
\def\rmO{{\mathbf{O}}}
\def\rmP{{\mathbf{P}}}
\def\rmQ{{\mathbf{Q}}}
\def\rmR{{\mathbf{R}}}
\def\rmS{{\mathbf{S}}}
\def\rmT{{\mathbf{T}}}
\def\rmU{{\mathbf{U}}}
\def\rmV{{\mathbf{V}}}
\def\rmW{{\mathbf{W}}}
\def\rmX{{\mathbf{X}}}
\def\rmY{{\mathbf{Y}}}
\def\rmZ{{\mathbf{Z}}}

\def\ermA{{\textnormal{A}}}
\def\ermB{{\textnormal{B}}}
\def\ermC{{\textnormal{C}}}
\def\ermD{{\textnormal{D}}}
\def\ermE{{\textnormal{E}}}
\def\ermF{{\textnormal{F}}}
\def\ermG{{\textnormal{G}}}
\def\ermH{{\textnormal{H}}}
\def\ermI{{\textnormal{I}}}
\def\ermJ{{\textnormal{J}}}
\def\ermK{{\textnormal{K}}}
\def\ermL{{\textnormal{L}}}
\def\ermM{{\textnormal{M}}}
\def\ermN{{\textnormal{N}}}
\def\ermO{{\textnormal{O}}}
\def\ermP{{\textnormal{P}}}
\def\ermQ{{\textnormal{Q}}}
\def\ermR{{\textnormal{R}}}
\def\ermS{{\textnormal{S}}}
\def\ermT{{\textnormal{T}}}
\def\ermU{{\textnormal{U}}}
\def\ermV{{\textnormal{V}}}
\def\ermW{{\textnormal{W}}}
\def\ermX{{\textnormal{X}}}
\def\ermY{{\textnormal{Y}}}
\def\ermZ{{\textnormal{Z}}}

\def\vzero{{\bm{0}}}
\def\vone{{\bm{1}}}
\def\vmu{{\bm{\mu}}}
\def\vtheta{{\bm{\theta}}}
\def\va{{\bm{a}}}
\def\vb{{\bm{b}}}
\def\vc{{\bm{c}}}
\def\vd{{\bm{d}}}
\def\ve{{\bm{e}}}
\def\vf{{\bm{f}}}
\def\vg{{\bm{g}}}
\def\vh{{\bm{h}}}
\def\vi{{\bm{i}}}
\def\vj{{\bm{j}}}
\def\vk{{\bm{k}}}
\def\vl{{\bm{l}}}
\def\vm{{\bm{m}}}
\def\vn{{\bm{n}}}
\def\vo{{\bm{o}}}
\def\vp{{\bm{p}}}
\def\vq{{\bm{q}}}
\def\vr{{\bm{r}}}
\def\vs{{\bm{s}}}
\def\vt{{\bm{t}}}
\def\vu{{\bm{u}}}
\def\vv{{\bm{v}}}
\def\vw{{\bm{w}}}
\def\vx{{\bm{x}}}
\def\vy{{\bm{y}}}
\def\vz{{\bm{z}}}

\def\evalpha{{\alpha}}
\def\evbeta{{\beta}}
\def\evepsilon{{\epsilon}}
\def\evlambda{{\lambda}}
\def\evomega{{\omega}}
\def\evmu{{\mu}}
\def\evpsi{{\psi}}
\def\evsigma{{\sigma}}
\def\evtheta{{\theta}}
\def\eva{{a}}
\def\evb{{b}}
\def\evc{{c}}
\def\evd{{d}}
\def\eve{{e}}
\def\evf{{f}}
\def\evg{{g}}
\def\evh{{h}}
\def\evi{{i}}
\def\evj{{j}}
\def\evk{{k}}
\def\evl{{l}}
\def\evm{{m}}
\def\evn{{n}}
\def\evo{{o}}
\def\evp{{p}}
\def\evq{{q}}
\def\evr{{r}}
\def\evs{{s}}
\def\evt{{t}}
\def\evu{{u}}
\def\evv{{v}}
\def\evw{{w}}
\def\evx{{x}}
\def\evy{{y}}
\def\evz{{z}}

\def\mA{{\bm{A}}}
\def\mB{{\bm{B}}}
\def\mC{{\bm{C}}}
\def\mD{{\bm{D}}}
\def\mE{{\bm{E}}}
\def\mF{{\bm{F}}}
\def\mG{{\bm{G}}}
\def\mH{{\bm{H}}}
\def\mI{{\bm{I}}}
\def\mJ{{\bm{J}}}
\def\mK{{\bm{K}}}
\def\mL{{\bm{L}}}
\def\mM{{\bm{M}}}
\def\mN{{\bm{N}}}
\def\mO{{\bm{O}}}
\def\mP{{\bm{P}}}
\def\mQ{{\bm{Q}}}
\def\mR{{\bm{R}}}
\def\mS{{\bm{S}}}
\def\mT{{\bm{T}}}
\def\mU{{\bm{U}}}
\def\mV{{\bm{V}}}
\def\mW{{\bm{W}}}
\def\mX{{\bm{X}}}
\def\mY{{\bm{Y}}}
\def\mZ{{\bm{Z}}}
\def\mBeta{{\bm{\beta}}}
\def\mPhi{{\bm{\Phi}}}
\def\mLambda{{\bm{\Lambda}}}
\def\mSigma{{\bm{\Sigma}}}

\newcommand{\tens}[1]{\bm{\mathsfit{#1}}}
\def\tA{{\tens{A}}}
\def\tB{{\tens{B}}}
\def\tC{{\tens{C}}}
\def\tD{{\tens{D}}}
\def\tE{{\tens{E}}}
\def\tF{{\tens{F}}}
\def\tG{{\tens{G}}}
\def\tH{{\tens{H}}}
\def\tI{{\tens{I}}}
\def\tJ{{\tens{J}}}
\def\tK{{\tens{K}}}
\def\tL{{\tens{L}}}
\def\tM{{\tens{M}}}
\def\tN{{\tens{N}}}
\def\tO{{\tens{O}}}
\def\tP{{\tens{P}}}
\def\tQ{{\tens{Q}}}
\def\tR{{\tens{R}}}
\def\tS{{\tens{S}}}
\def\tT{{\tens{T}}}
\def\tU{{\tens{U}}}
\def\tV{{\tens{V}}}
\def\tW{{\tens{W}}}
\def\tX{{\tens{X}}}
\def\tY{{\tens{Y}}}
\def\tZ{{\tens{Z}}}

\def\gA{{\mathcal{A}}}
\def\gB{{\mathcal{B}}}
\def\gC{{\mathcal{C}}}
\def\gD{{\mathcal{D}}}
\def\gE{{\mathcal{E}}}
\def\gF{{\mathcal{F}}}
\def\gG{{\mathcal{G}}}
\def\gH{{\mathcal{H}}}
\def\gI{{\mathcal{I}}}
\def\gJ{{\mathcal{J}}}
\def\gK{{\mathcal{K}}}
\def\gL{{\mathcal{L}}}
\def\gM{{\mathcal{M}}}
\def\gN{{\mathcal{N}}}
\def\gO{{\mathcal{O}}}
\def\gP{{\mathcal{P}}}
\def\gQ{{\mathcal{Q}}}
\def\gR{{\mathcal{R}}}
\def\gS{{\mathcal{S}}}
\def\gT{{\mathcal{T}}}
\def\gU{{\mathcal{U}}}
\def\gV{{\mathcal{V}}}
\def\gW{{\mathcal{W}}}
\def\gX{{\mathcal{X}}}
\def\gY{{\mathcal{Y}}}
\def\gZ{{\mathcal{Z}}}

\def\sA{{\mathbb{A}}}
\def\sB{{\mathbb{B}}}
\def\sC{{\mathbb{C}}}
\def\sD{{\mathbb{D}}}
\def\sF{{\mathbb{F}}}
\def\sG{{\mathbb{G}}}
\def\sH{{\mathbb{H}}}
\def\sI{{\mathbb{I}}}
\def\sJ{{\mathbb{J}}}
\def\sK{{\mathbb{K}}}
\def\sL{{\mathbb{L}}}
\def\sM{{\mathbb{M}}}
\def\sN{{\mathbb{N}}}
\def\sO{{\mathbb{O}}}
\def\sP{{\mathbb{P}}}
\def\sQ{{\mathbb{Q}}}
\def\sR{{\mathbb{R}}}
\def\sS{{\mathbb{S}}}
\def\sT{{\mathbb{T}}}
\def\sU{{\mathbb{U}}}
\def\sV{{\mathbb{V}}}
\def\sW{{\mathbb{W}}}
\def\sX{{\mathbb{X}}}
\def\sY{{\mathbb{Y}}}
\def\sZ{{\mathbb{Z}}}

\def\emLambda{{\Lambda}}
\def\emA{{A}}
\def\emB{{B}}
\def\emC{{C}}
\def\emD{{D}}
\def\emE{{E}}
\def\emF{{F}}
\def\emG{{G}}
\def\emH{{H}}
\def\emI{{I}}
\def\emJ{{J}}
\def\emK{{K}}
\def\emL{{L}}
\def\emM{{M}}
\def\emN{{N}}
\def\emO{{O}}
\def\emP{{P}}
\def\emQ{{Q}}
\def\emR{{R}}
\def\emS{{S}}
\def\emT{{T}}
\def\emU{{U}}
\def\emV{{V}}
\def\emW{{W}}
\def\emX{{X}}
\def\emY{{Y}}
\def\emZ{{Z}}
\def\emSigma{{\Sigma}}

\newcommand{\etens}[1]{\mathsfit{#1}}
\def\etLambda{{\etens{\Lambda}}}
\def\etA{{\etens{A}}}
\def\etB{{\etens{B}}}
\def\etC{{\etens{C}}}
\def\etD{{\etens{D}}}
\def\etE{{\etens{E}}}
\def\etF{{\etens{F}}}
\def\etG{{\etens{G}}}
\def\etH{{\etens{H}}}
\def\etI{{\etens{I}}}
\def\etJ{{\etens{J}}}
\def\etK{{\etens{K}}}
\def\etL{{\etens{L}}}
\def\etM{{\etens{M}}}
\def\etN{{\etens{N}}}
\def\etO{{\etens{O}}}
\def\etP{{\etens{P}}}
\def\etQ{{\etens{Q}}}
\def\etR{{\etens{R}}}
\def\etS{{\etens{S}}}
\def\etT{{\etens{T}}}
\def\etU{{\etens{U}}}
\def\etV{{\etens{V}}}
\def\etW{{\etens{W}}}
\def\etX{{\etens{X}}}
\def\etY{{\etens{Y}}}
\def\etZ{{\etens{Z}}}

\newcommand{\pdata}{p_{\rm{data}}}
\newcommand{\ptrain}{\hat{p}_{\rm{data}}}
\newcommand{\Ptrain}{\hat{P}_{\rm{data}}}
\newcommand{\pmodel}{p_{\rm{model}}}
\newcommand{\Pmodel}{P_{\rm{model}}}
\newcommand{\ptildemodel}{\tilde{p}_{\rm{model}}}
\newcommand{\pencode}{p_{\rm{encoder}}}
\newcommand{\pdecode}{p_{\rm{decoder}}}
\newcommand{\precons}{p_{\rm{reconstruct}}}

\newcommand{\laplace}{\mathrm{Laplace}} 

\newcommand{\E}{\mathbb{E}}
\newcommand{\Ls}{\mathcal{L}}
\newcommand{\R}{\mathbb{R}}
\newcommand{\emp}{\tilde{p}}
\newcommand{\lr}{\alpha}
\newcommand{\reg}{\lambda}
\newcommand{\rect}{\mathrm{rectifier}}
\newcommand{\softmax}{\mathrm{softmax}}
\newcommand{\sigmoid}{\sigma}
\newcommand{\softplus}{\zeta}
\newcommand{\KL}{D_{\mathrm{KL}}}
\newcommand{\Var}{\mathrm{Var}}
\newcommand{\standarderror}{\mathrm{SE}}
\newcommand{\Cov}{\mathrm{Cov}}
\newcommand{\normlzero}{L^0}
\newcommand{\normlone}{L^1}
\newcommand{\normltwo}{L^2}
\newcommand{\normlp}{L^p}
\newcommand{\normmax}{L^\infty}

\newcommand{\parents}{Pa} 

\let\ab\allowbreak

\begin{abstract}
Modern deep learning models exhibit strong capabilities across diverse applications, yet remain vulnerable to malicious inputs that induce erroneous predictions via feature-space distortion. 
To address this vulnerability, we propose Feature-space Smoothing (FS), a general defense framework that provides certified robustness at the feature representation level.
We show that FS converts a given feature encoder into a smoothed variant that is guaranteed to maintain a certified lower bound on the cosine similarity between clean and adversarial features under $\ell_2$-bounded perturbations.
We then establish that this Feature Cosine Similarity Bound (FCSB) can be extended to the prediction-wise certification under the cosine similarity measure, and the value of FCSB is determined by the encoder’s intrinsic Gaussian robustness score.
Building on those insights, we introduce the Gaussian Smoothness Booster (GSB), a plug-and-play module to improve the encoder's Gaussian robustness score. 
Specifically, the GSB module is plugged to enhance the feature-space consistency and maintain the feature utility for downstream tasks under Gaussian perturbations.
This design enables seamless integration of FS on the protected model, e.g., Multimodal Large Language Models (MLLMs), without additional model retraining or alignment, improving its robustness while preserving the performance for downstream task-oriented decoding. 
Extensive experiments demonstrate that integrating FS consistently provides non-trivial certified robustness and significantly improves task-oriented performance under strong white-box adversarial attacks across diverse models and applications.
\end{abstract}

\begin{IEEEkeywords}
Adversarial robustness, Feature-space Smoothing, Robust encoder, Vision-language learning.
\end{IEEEkeywords}

\section{Introduction}
\vspace{-1mm}
\label{sec:intro}
The emergence of the Multimodal Large Language Models (MLLMs), such as GPT-5.4~\cite{openai2026gpt54}, Gemini 3.1 Pro~\cite{deepmind2026gemini31}, and Claude Sonnet 4.6~\cite{anthropic2026claude46}, has fundamentally reshaped existing working paradigms and significantly advanced societal productivity.
Despite their remarkable capabilities across a broad spectrum of real-world tasks, these models still encounter critical safety challenges, such as adversarial vulnerabilities~\cite{zhao2023evaluating,cui2024robustness,li2025frustratingly,jia2025adversarial,wei2023adaptive,wei2024physical,ding2026transferable,li2023recognizing,zhang2026nap,liu2026sega,wei2026enhancing,hu2025dynamicpae,peck2023introduction}.
Adversaries can manipulate predictions of deep learning models to a malicious state by injecting subtle and imperceptible perturbations to the clean inputs, exploiting the models’ insufficient local smoothness and uncontrolled Lipschitz continuity~\cite{goodfellow2014explaining, cohen2019certified, hein2017formal,xiamitigating}.

Countermeasures towards those threats can be roughly classified into empirical defense and certified defense.
Typical empirical approaches include adversarial training~\cite{madry2018towards,rebuffi2021data,wang2024revisiting,xhonneux2024efficient,casper2024defending,schlarmann2024robust,malik2025robust,maounderstanding,angioni2025robustness,lau2023interpolated} and input purification~\cite{nie2022diffusion,yoon2021adversarial,lee2023robust,lei2025instant,zollicoffer2025lorid}. 
Despite their demonstrated empirical effectiveness, these approaches lack formal robustness protection and remain susceptible to stronger adversaries~\cite{tramer2020adaptive,chen2023adaptive,wu2020stronger,olivier2023many}.
Moreover, the multimodal nature of MLLMs also poses a great challenge for existing adversarial training methods.
Since they accept heterogeneous inputs across diverse domains, training a robust encoder via adversarial training that can generalize to various scenarios is challenging and computationally costly.
In contrast, certified approaches aim to guarantee that the model returns a constant prediction result within a certain range, usually a $\ell_2$ or $\ell_\infty$-norm constrained area~\cite{raghunathan2018certified,wong2018provable,hao2022gsmooth,kakizaki2023certified,cohen2019certified,xiamitigating,salman2019provably}.
However, most previous certified defense approaches~\cite {cohen2019certified, xiamitigating, wang2024drf,hao2022gsmooth,salman2019provably} predominantly assume specific prediction settings (e.g., classification task), thereby limiting their applicability to more general tasks on MLLMs.

\begin{figure}
    \centering
    \includegraphics[width=1\linewidth]{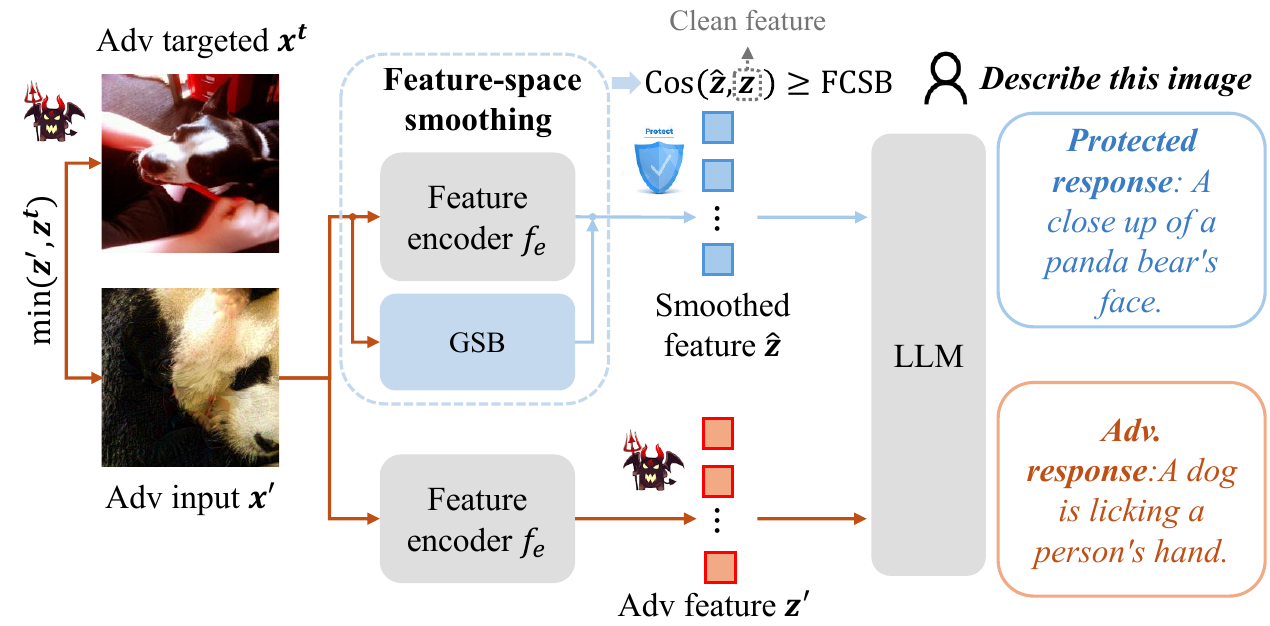}
    \vspace{-1mm}
    \caption{Illustration of the FS framework with GSB, which guarantees that the cosine similarity of the adversarial and clean features extracted by the MLLM's encoder is larger than FCSB for robust predictions.}
    \vspace{-4mm}
    \label{fig:teaser}
\end{figure}

This raises a fundamental challenge: \textbf{Developing an effective defense framework that simultaneously provides certified robustness while remaining generalized across diverse models and application scenarios.}
To address that, we propose the Feature-space Smoothing (FS), a general defense framework that offers certified robustness on the feature representations.
Specifically, FS smooths the vanilla feature encoder via the Gaussian smoothing.
The resulting smoothed encoder is then guaranteed to maintain a provable Feature Cosine Similarity Bound (FCSB), a lower bound on the cosine similarity between clean and adversarial feature representations under $\ell_2$-norm bounded attacks.
Notably, we show that the FCSB naturally extends to prediction-level certification under the cosine similarity measure. 
We further establish that the FCSB achieved by a smoothed feature encoder is intrinsically determined by the Gaussian robustness of the original encoder. 
This motivates the Gaussian robustness score, a differentiable metric that effectively bridges the Gaussian robustness of the original encoder and the FCSB of its smoothed counterpart.
%

Generally, the Gaussian robustness score of vanilla encoders remains limited without Gaussian noise augmented training, leading to a correspondingly suboptimal FCSB.
To address this and make the proposed FS framework generalized to large-scale pre-trained models like MLLMs, we propose the Gaussian Smoothness Booster (GSB), which consists of a lightweight Gaussian denoiser and a residual feature smoothness mapper.
This GSB serves as a plug-and-play module to enhance the Gaussian robustness score of the protected encoders while avoiding the costly fine-tuning or alignment.
Specifically, the denoiser is plugged prior to the feature encoder and is trained to mitigate Gaussian corruption.
In parallel, the smoothness mapper operates after feature extraction for feature-level refinement.
It is trained to preserve the original feature distribution while further enhancing the Gaussian robustness score.
These two components of GSB work synergistically to enhance the certified robustness of the protected encoder.
The GSB is trained using a proposed utility–robustness trade-off loss in a self-supervised manner, aiming to improve the Gaussian robustness without modifying the backbone encoder.

To comprehensively validate the proposed FS framework, we apply it to various vision encoders, including CLIP~\cite{radford2021learning}, SigLIP 2~\cite{tschannen2025siglip}, and EVA-CLIP~\cite{sun2023eva}, and systematically evaluate the resulting certified robustness in both feature-wise and prediction-wise results. 
%
%
Beyond theoretical certification, we further evaluate FS on MLLMs against state-of-the-art white-box adversarial attacks tailored to multimodal models, and compare it with strong adversarial training baselines. 
Extensive experimental results show that FS not only establishes strong certified robustness guarantees but also yields superior empirical adversarial robustness across diverse MLLMs and downstream tasks. 
Overall, the main contributions of this work are:
\begin{itemize}
    \item We propose Feature-space Smoothing (FS), a defense framework that transforms a given feature encoder into a smoothed variant, and theoretically establish that the resulting encoder satisfies a FCSB between clean and adversarial feature representations.
    \item We introduce the Gaussian Smoothness Booster (GSB), a plug-and-play module that effectively enhances the FCSB of the smoothed encoder without requiring fine-tuning of the backbone model.
    \item We provide a systematic evaluation of certified feature-space robustness across diverse vision encoders, and further assess prediction-level certification on classification and retrieval tasks via a cosine-similarity-based prediction head.
    \item We conduct extensive experiments demonstrating the non-trivial practical protection offered by FS. Applying the proposed FS to various vision-language encoders and MLLMs substantially improves their task-oriented performance under diverse white-box attacks across multiple applications, without requiring retraining or fine-tuning of the protected models.
\end{itemize}
\vspace{-3mm}

\section{Related Work}
\label{sec:related work}
\noindent\textbf{Adversarial attacks on MLLMs.} 
While MLLMs continue to achieve remarkable performance across diverse applications, extensive works~\cite{qi2024visual,cui2024robustness,zhao2023evaluating,jia2025adversarial,li2025frustratingly,wang2024break,zhang2024adversarial,zhang2025anyattack,xie2025chain} have exposed their adversarial vulnerabilities, raising serious safety concerns.
Early work, such as AttackVLM~\cite{zhao2023evaluating}, explores transferable attacks by disrupting the feature representations of CLIP~\cite{radford2021learning} and BLIP~\cite{li2023blip}, showing strong adversarial transferability among open-source models but limited effectiveness against closed-source commercial systems.
More recent approaches, such as M-Attack~\cite{li2025frustratingly} and FOA-Attack~\cite{jia2025adversarial}, further advance this direction by leveraging multi-extractor ensembles and feature-space alignment, achieving over 90\% targeted attack success rates on image-captioning tasks against powerful closed-sourced commercial MLLMs (\eg, ChatGPT-4o).

Beyond the single modality manipulations, recent work has shifted toward multimodal jailbreak and cross-modality safety-alignment attacks~\cite{li2025odysseus,zhao2025vrsa,shayeganijailbreak,fang2025safemlrm}, where adversarial images or image–text pairs bypass alignment safeguards to elicit harmful outputs. Such attacks exploit the interaction between vision encoders and aligned LLMs~\cite{shayeganijailbreak} or amplify malicious intent via carefully crafted visual inputs~\cite{li2024images}, and can further propagate in agentic or multi-agent settings as “infectious jailbreaks”~\cite{gu2024agent}. These developments underscore the urgent need for trustworthy defenses that provide robust and provable protection for MLLMs.

\noindent\textbf{Adversarial defense on MLLMs.}
Adversarial defense methods can be broadly classified into {empirical} and {provable} approaches.
Empirical defenses for MLLMs mainly include {adversarial training}~\cite{madry2018towards,rebuffi2021data,wang2024revisiting,xhonneux2024efficient,casper2024defending,schlarmann2024robust,malik2025robust,maounderstanding}, which enhances robustness by augmenting training data with adversarial examples, and {input purification}~\cite{nie2022diffusion,lei2025instant,li2025adbm}, which employs generative mechanisms such as diffusion models or autoencoders to recover clean inputs prior to inference.
Recent studies~\cite{malik2025robust, maounderstanding,schlarmann2024robust} have revealed that utilization of adversarially trained CLIP-feature encoders can enhance adversarial robustness for MLLMs.
Complementary to training-time approaches, a growing body of work investigates inference-time and lightweight defenses.
These methods aim to enhance safety without retraining the full model. Representative approaches include prompt- or guardrail-based techniques that steer model behavior~\cite{wang2024adashield,oh2025uniguard}, inference-time alignment guided by safety objectives~\cite{ghosal2025immune}, and representation-level interventions such as activation steering and token filtering~\cite{wang2025steering,chen2025safeptr}. In addition, detection-based methods~\cite{zhang2024pip} and cross-modality defenses that introduce protective perturbations have also been explored~\cite{li2025attack}.
Despite progress, these empirical defenses typically lack formal robustness guarantees and can remain brittle under adaptive or unseen attacks~\cite{tramer2020adaptive,chen2023adaptive,wu2020stronger}. Moreover, adversarial training can be computationally expensive and may degrade clean performance.

\noindent\textbf{Certified defense.} certified defenses aim to provide mathematically provable robustness guarantees.
The use of Gaussian smoothing for certified robustness was initially introduced for classification models~\cite{cohen2019certified,li2018second,lecuyer2019certified}, yet its theoretical formulation is restricted to one-dimensional outputs, limiting its applicability to tasks such as auto-regression or multimodal generation.

To overcome this limitation, prior work has explored extending certification beyond classification. These efforts include certifying structured outputs in metric spaces~\cite{kumar2021center}, adapting smoothing to regression and localization tasks such as object detection~\cite{chiang2020detection}.
More recently, certified defenses have been adapted to vision-language models. Existing methods either specialize in smoothing to CLIP-style predictors through incremental smoothing and caching~\cite{nirala2024fast}, or introduce prompt-based adaptations for domain-specific VLMs~\cite{hussein2024promptsmooth}. For generative models, smoothing has been applied by reducing sequence outputs to auxiliary classification objectives (e.g., harmful vs.\ harmless)~\cite{seferis2025randomized}. Despite these advances, current approaches remain output-centric, requiring task-specific reductions or surrogate objectives, which limits their generality across diverse model pipelines.

In contrast, our proposed Feature-space Smoothing (FS) shifts the focus from task-specific certification to feature-level robustness. By directly establishing a certified lower bound on the cosine similarity between clean and adversarial feature representations, FS provides a task-agnostic robustness guarantee that is naturally compatible with diverse downstream decoding processes. Furthermore, FS operates without retraining or modifying the prediction head, enabling seamless integration into existing MLLMs while preserving feature utility.
\section{Feature-Space Smoothing}
\label{sec: feature space smoothing}
\subsection{Preliminary}
Let $\mathcal{F}$ denote a general deep learning model consisting of a feature encoder $f_e: \vx \rightarrow \vz$ that encodes the input $\vx$ to a feature representation $\vz$, and a decoder $f_d: \vz \rightarrow \vy$ that produces the final output $\vy$.
Let $\mathcal{L}$ denote the general loss function (e.g., cross-entropy) that measures the discrepancy between the model's output and the ground truth.

\noindent\textbf{Adversarial attacks:}
Let $\mathcal{B}_{\epsilon}(\vx)=\left\{ {\vx':{{\left\| {\vx' - \vx} \right\|}_p} \le \epsilon } \right\}$ be {an} $\ell_p$-norm ball centered at the input $\vx$, where $\epsilon$ is a pre-defined perturbation bound. 
For each input $\vx$, the adversarial attacks aim to find an adversarial input $\vx'=\vx+{\bm{\delta}}$ that misleads the model by solving: 
\begin{equation}
%
\mathop {\max }\limits_{{\vx} + {\bm{\delta}} \in \mathcal{B}_{\epsilon}(\vx)} \mathcal{L}\left( {\mathcal{F}\left( {{\vx}} \right),\mathcal{F}\left( {{\vx} + {\bm{\delta}}} \right)} \right).
%
  \label{eq:1}
\end{equation}
\noindent\textbf{Adversarial effects on feature representations:} While adversarial attacks primarily aim to alter the model’s predictions, numerous studies~\cite{jia2025adversarial,xia2024transferable,li2020yet,huang2019enhancing,li2023improving,ding2024transferable} have shown that successful attacks typically induce substantial distortions in the model’s feature representations.
Let $\vx'$ denote the adversarial example with adversarial feature $\vz'$.
Let $\vz^{t}$ be the adversarial targeted feature with malicious semantic meaning.
The attack generally leads to $\max \mathcal{L}\left( {\vz',\vz} \right)$ for untargeted attacks and $\min \mathcal{L}\left( {\vz',\vz^{t}} \right)$ for targeted attacks.
Thus, ensuring a robust feature encoder that $ \min \mathcal{L}\left( {\vz',\vz} \right)$ is crucial for the trustworthy prediction.

\noindent\textbf{Randomized smoothing:} Consider a $k$ classes classification problem with the input $\vx \in {\mathbb{R}^d}$ and the label $ \vy \in \mathcal{Y} = \left\{ {{c_1}, \ldots,{c_k}} \right\}$.
Randomized Smoothing (RS) first corrupts each input $\bm{x}$ by adding the Gaussian noise $\bm{\varepsilon} \sim \mathcal{N}(0,{\sigma ^2}\bm{I})$. 
Then it turns an arbitrary base classifier $\mathcal{F}$ into a smoothed version $\hat{\mathcal{F}}$ that possesses ${\ell_2}$ certified robustness guarantees. 
The smoothed classifier $\hat{\mathcal{F}}$ returns whichever class the base classifier $ \mathcal{F}$ is most likely to return among the distribution $\vx+\bm{\varepsilon}\sim \mathcal{N}(\vx,{\sigma ^2}\bm{I})$, which is:
\begin{equation}
  \begin{array}{l}
\hat{\mathcal{F}}(\vx) = \mathop {\arg \max }\limits_{c \in \mathcal{Y}} \sP(\mathcal{F}(\vx + \bm{\varepsilon} ) = c).
\end{array}
  \label{eq:RS_classify}
\end{equation}
RS then guarantees a certified radius $\mathcal{R}$ for this smoothed classifier $\hat{\mathcal{F}}$.
For any perturbation $\delta$ satisfying $\left \| \delta \right \|_2 \le \mathcal{R}$, the smoothed classifier is guaranteed to return a robust prediction that makes $\hat{\mathcal{F}}(\vx+\delta)=F(x)$.

\noindent\textbf{Limitations for RS.} Although randomized smoothing (RS) provides effective certified protection, its theoretical framework is inherently tied to specific prediction settings, such as image classification, which limits its applicability to broader feature spaces and multimodal tasks.
Meanwhile, estimating $\mathbb{P}(\mathcal{F}(\vx + \bm{\varepsilon}) = c)$ in Equation~\ref{eq:RS_classify}, incurs substantial computational overhead, since each estimation requires multiple forward passes through the entire model.

\subsection{Certified Bound via Feature Space Smoothing}

Considering the limitations inherent in the RS, we introduce the Feature-space Smoothing (FS), a general framework for feature-wise robustness certification.
By turning a given encoder $f_e$ into a smoothed version $\hat{f_e}$, FS guarantees that $\hat{f_e}$ maintains a certified lower bound on the cosine similarity between clean and adversarial representations under $\ell_2$-norm constrained perturbations.

\noindent\textbf{Smoothed feature encoder.}
For any feature encoder $f_e: \vx \rightarrow \vz$, where $\vz$ is the representation normalized into the $l_2$ unit sphere, FS defines the smoothed encoder $\hat f_e(\bm{x})$ as:
\begin{align}
&\hat f_e(\bm{x}) 
= \mathbb{E}_{\bm{\varepsilon} \sim \mathcal{N}(0, I)} [f_e(\bm{x} + \bm{\varepsilon})] \notag\\
&= \frac{1}{(2\pi)^{d/2}}
   \int_{\mathbb{R}^d} 
   f_e(\bm{x} + \bm{\varepsilon})
   \exp\!\left(-\tfrac{1}{2}\|\bm{\varepsilon}\|^2\right)
   d\bm{\varepsilon},
\label{eq:apdx_RS}
\end{align}
where $I$ denotes the $d\times d$ identity and $\|\cdot\|$ is the Euclidean norm. 
$\bm{\varepsilon} \sim \mathcal{N}(0, I)$ denotes Gaussian noise with zero mean and standard deviation one.
The smoothed feature encoder $\hat f_e(\bm{x}) $ outputs the expectation of feature representations over the Gaussian distribution $\mathcal{N}(\vx, I)$.

\noindent\textbf{Gaussian robustness score.} Define ${S_{\bm{x_t}}}(\vx)$ as the score function that evaluates feature discrepancy between an input $\vx$ and a targeted example $\vx_t$, which is:

\begin{equation}
S_{\vx_t}(\vx)
=\tfrac{1}{2}\Big(1+\mathrm{Cos}\big(f_e(\vx),\,f_e(\vx_t)\big)\Big),
\label{eq:score_func_cos}
\end{equation}
where ${Cos}(\cdot,\cdot)$ denotes the cosine similarity and $S_{\vx_t}(\vx)\in [0,1]$.
The Gaussian robustness score $\hat{S}(\vx)$ is defined as:

\begin{equation}
\begin{aligned}
&\hat S(\vx)
=\mathop {\mathbb{E}}\limits_{\bm{\varepsilon}\sim\mathcal N(0, I)}
\big[S_{\vx}(\vx+\bm{\varepsilon})\big]\\[-3pt]
&=\tfrac{1}{2}\!\left(
1+\mathop {\mathbb{E}}\limits_{\bm{\varepsilon}\sim\mathcal N(0, I)}
\!\left[\mathrm{Cos}\big(f_e(\vx+\bm{\varepsilon}),\,f_e(\vx)\big)\right]\right).
\label{eq:smoothed_score_cos}
\end{aligned}
\end{equation}
The score $\hat{S}(\vx)$ evaluates the expected cosine similarity between the feature representation of a Gaussian-perturbed input $\vx+\bm{\varepsilon}$ and that of the clean input $\vx$, characterizing the Gaussian robustness of the vanilla feature encoder ${f}_e$.
Notably, we show that this Gaussian robustness score $\hat{S}(\vx)$ preserves a good Lipschitz property, which serves as the theoretical foundation for the certified robustness for $\hat{f}_e(\vx)$.
%
\begin{lemma}[\textbf{Lipschitz property for the Gaussian robustness score}]
    Let $\Phi(a)=\frac{1}{\sqrt{2\pi}}\int_{-\infty }^{a} \exp(-\frac{1}{2}s^2)ds$ be a standard Gaussian cumulative distribution function and $\Phi^{-1}$ be its inverse. For any feature encoder $f_e: \vx \rightarrow \vz$, the mapping $\vx \rightarrow \Phi^{-1}(\hat{S}(\bm{x}))$ is $1-Lipschitz$.
    \label{lemma1}
\end{lemma}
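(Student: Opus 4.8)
The plan is to reduce the statement to the classical Gaussian-smoothing Lipschitz lemma \cite{cohen2019certified,salman2019provably}: for \emph{any} measurable $g:\mathbb{R}^d\to[0,1]$, the map $\vx\mapsto\Phi^{-1}\big(\mathbb{E}_{\bm\varepsilon\sim\mathcal N(0,I)}[g(\vx+\bm\varepsilon)]\big)$ is $1$-Lipschitz. To put $\hat S$ into this form I first freeze the target: in \eqref{eq:smoothed_score_cos} the anchor $\vx$ plays two roles, so I treat the target feature $f_e(\vx_t)$ as a fixed unit vector and regard the smoothing as acting on the base point only, i.e. $g(\cdot)=S_{\vx_t}(\cdot)=\tfrac12(1+\mathrm{Cos}(f_e(\cdot),f_e(\vx_t)))$. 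Since cosine similarity lies in $[-1,1]$, the function $g$ takes values in $[0,1]$, and $\hat S=\mathbb{E}_{\bm\varepsilon}[g(\vx+\bm\varepsilon)]=g*\varphi_d$ is exactly the Gaussian convolution required by the lemma, where $\varphi_d$ is the standard $d$-dimensional Gaussian density. This is precisely the quantity controlled during certification, where one anchors the clean feature and varies the (attacked) input.

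Then I would prove the general lemma by a gradient argument. Because $\hat S=g*\varphi_d$ with $g$ bounded, $\hat S$ is smooth, and integration by parts against the Gaussian (Stein's identity) gives $\nabla\hat S(\vx)=\mathbb{E}_{\bm\varepsilon}[\bm\varepsilon\, g(\vx+\bm\varepsilon)]$. Since $(\Phi^{-1})'(p)=1/\varphi(\Phi^{-1}(p))$ with $\varphi$ the $1$-D standard normal density, the chain rule yields $\|\nabla(\Phi^{-1}\circ\hat S)(\vx)\|=\|\nabla\hat S(\vx)\|/\varphi(\Phi^{-1}(\hat S(\vx)))$, so it suffices to establish the pointwise bound $\|\nabla\hat S(\vx)\|\le\varphi(\Phi^{-1}(\hat S(\vx)))$.

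The crux, and the step I expect to be the main obstacle, is this gradient bound, which is a Gaussian isoperimetric / Neyman–Pearson fact. For a unit direction $\vu$ the directional derivative is $\vu^\top\nabla\hat S(\vx)=\mathbb{E}_{\bm\varepsilon}[(\vu^\top\bm\varepsilon)\,g(\vx+\bm\varepsilon)]$. Over all $g$ valued in $[0,1]$ with prescribed mean $p=\hat S(\vx)$, this linear functional is maximized by the half-space indicator $g=\mathbf{1}\{\vu^\top\bm\varepsilon> -\Phi^{-1}(p)\}$, since $\vu^\top\bm\varepsilon$ is a standard $1$-D normal and concentrating the mass of $g$ on the largest values of $\vu^\top\bm\varepsilon$ is optimal. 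Evaluating the maximizer gives $\int_{-\Phi^{-1}(p)}^{\infty} s\,\varphi(s)\,ds=\varphi(\Phi^{-1}(p))$, the claimed bound; taking the supremum over $\vu$ yields $\|\nabla\hat S(\vx)\|\le\varphi(\Phi^{-1}(\hat S(\vx)))$ and hence the $1$-Lipschitz conclusion.

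A few points need care. The half-space optimality must be argued cleanly via a rearrangement (bathtub) or Neyman–Pearson exchange: whenever $g$ is not already a half-space of the correct mass, moving mass toward larger $\vu^\top\bm\varepsilon$ weakly increases the objective while preserving the mean. Differentiation under the integral sign and Stein's identity require only that $g$ be bounded, which holds since $g\in[0,1]$; Gaussian smoothing then makes $\hat S$ infinitely differentiable, so no regularity of the base encoder $f_e$ is needed. In particular the bound is independent of any Lipschitz constant of $f_e$, which is exactly what makes the guarantee meaningful, and I would reiterate the interpretational point from the first paragraph that the $1$-Lipschitz property is in the base-point argument with the target held fixed.
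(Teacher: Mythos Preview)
Your proposal is correct and follows essentially the same route as the paper's proof: both freeze the target feature so that $\hat S$ becomes a Gaussian convolution of a $[0,1]$-valued function, apply the chain rule to reduce the $1$-Lipschitz claim to the pointwise gradient bound $\|\nabla\hat S(\vx)\|\le\varphi(\Phi^{-1}(\hat S(\vx)))$, express the directional derivative via Stein's identity as $\mathbb{E}[(\vu^\top\bm\varepsilon)\,g(\vx+\bm\varepsilon)]$, and then invoke the half-space extremal (Neyman--Pearson / bathtub) argument to obtain the value $\varphi(\Phi^{-1}(p))$. Your write-up is in fact more explicit than the paper's on the anchor-freezing subtlety, the regularity justification for differentiating under the integral, and the optimality step, all of which the paper leaves terse or implicit.
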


The proof of Lemma~\ref{lemma1} is in our supplementary material, Section 1. 
It implies that the mapping from $\vx$ to $\Phi^{-1}(\hat{S}(\vx))$ exhibits strong adversarial robustness, as it satisfies a $1$-Lipschitz constraint.
Generally, $\hat{S}(\vx)$ measures the Gaussian robustness of the vanilla feature encoder $f_e(x)$.
We then indicate that \textbf{{this score $\hat{S}(\vx)$ fundamentally determines the value of the certified feature-space robustness of its smoothed encoder $\hat{f}_e(x)$}}. 
%

%
\begin{theorem}[\textbf{Certified lower bound on the adversarial feature cosine similarity}]
\label{theorem:1}
For a given feature encoder ${f}_e$ and its smoothed version $\hat{f}_e$, let $\vx$ and $\vx'$ be clean and adversarial inputs with $\|\vx'-\vx\|\le \epsilon$. The cosine similarity between the adversarial feature $\hat{f}_e(\vx')$ and clean feature ${f}_e(\vx)$ satisfies: $ \mathrm{Cos}(\hat f_e(\vx'),f_e(\vx))\ge 2{\Phi }  \left ({\Phi ^{ - 1}}(\hat{S}(\vx))- \epsilon \right)-1$.
\end{theorem}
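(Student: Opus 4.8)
The plan is to reduce the statement to an application of the Lipschitz property in Lemma~\ref{lemma1}, after rewriting the cosine similarity as the expectation of a bounded, fixed-target score. First I would exploit the normalization $\|f_e(\vx)\|=1$ together with linearity of expectation. Since every output $f_e(\vx'+\bm{\varepsilon})$ lies on the unit sphere, the cosine coincides with the inner product, so $\mathrm{Cos}(f_e(\vx'+\bm{\varepsilon}),f_e(\vx))=\langle f_e(\vx'+\bm{\varepsilon}),f_e(\vx)\rangle = 2S_{\vx}(\vx'+\bm{\varepsilon})-1$, where the target is held \emph{fixed} at the clean input $\vx$. Taking the expectation over $\bm{\varepsilon}\sim\mathcal{N}(0,I)$ and pulling it through the inner product yields
\[
\langle \hat f_e(\vx'),f_e(\vx)\rangle = 2\,\hat S_{\vx}(\vx')-1,\qquad \hat S_{\vx}(\vx'):=\mathbb{E}_{\bm{\varepsilon}}\big[S_{\vx}(\vx'+\bm{\varepsilon})\big].
\]
This reduces the problem to lower-bounding the Gaussian-smoothed score $\hat S_{\vx}(\vx')$ of a single fixed-target $[0,1]$-valued function.

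Next I would invoke the Lipschitz argument behind Lemma~\ref{lemma1}. The crucial observation is that this argument applies to the Gaussian smoothing of \emph{any} $[0,1]$-valued function; applying it to $h(\cdot)=S_{\vx}(\cdot)$ with the target frozen at $\vx$ shows that $\vx'\mapsto \Phi^{-1}(\hat S_{\vx}(\vx'))$ is $1$-Lipschitz. Evaluating this smoothed map at $\vx'$ and at $\vx$, and noting that $\hat S_{\vx}(\vx)=\hat S(\vx)$ is exactly the Gaussian robustness score of the theorem, gives
\[
\big|\,\Phi^{-1}(\hat S_{\vx}(\vx'))-\Phi^{-1}(\hat S(\vx))\,\big|\le \|\vx'-\vx\|\le\epsilon .
\]
Keeping only the lower side and applying the monotone increasing $\Phi$ yields $\hat S_{\vx}(\vx')\ge \Phi\!\big(\Phi^{-1}(\hat S(\vx))-\epsilon\big)$, hence $\langle \hat f_e(\vx'),f_e(\vx)\rangle \ge 2\Phi\!\big(\Phi^{-1}(\hat S(\vx))-\epsilon\big)-1$, which is precisely the target bound at the level of the inner product.

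Finally I would convert this inner-product bound into the claimed cosine bound. Since $f_e(\vx)$ is a unit vector, $\mathrm{Cos}(\hat f_e(\vx'),f_e(\vx))=\langle \hat f_e(\vx'),f_e(\vx)\rangle / \|\hat f_e(\vx')\|$, and Jensen's inequality gives $\|\hat f_e(\vx')\|=\|\mathbb{E}_{\bm{\varepsilon}}[f_e(\vx'+\bm{\varepsilon})]\|\le \mathbb{E}_{\bm{\varepsilon}}[\|f_e(\vx'+\bm{\varepsilon})\|]=1$. In the meaningful regime where the certified value is nonnegative, dividing a nonnegative inner product by a norm in $(0,1]$ can only increase it, so the cosine similarity inherits the same lower bound and the theorem follows.

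The hard part will be this last conversion, namely the passage from the inner-product bound to the cosine bound through the sub-unit norm of the \emph{unnormalized} smoothed feature $\hat f_e(\vx')$: one must check that the factor $\|\hat f_e(\vx')\|\le1$ acts in the favorable direction, which is guaranteed once the target is frozen at the clean input and the bound lies in its nonnegative (certifying) range. A secondary care point is justifying the fixed-target use of Lemma~\ref{lemma1}: the lemma is stated with the target tied to the evaluation point, so I must confirm that its underlying smoothing argument is agnostic to the target and applies verbatim with the target held fixed at $\vx$.
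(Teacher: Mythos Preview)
Your proposal is correct and mirrors the paper's own proof essentially step for step: the paper also freezes the target at $\vx$, applies the $1$-Lipschitz property of $\vx'\mapsto\Phi^{-1}(\hat S_{\vx}(\vx'))$ to obtain $\hat S_{\vx}(\vx')\ge\Phi(\Phi^{-1}(\hat S(\vx))-\epsilon)$, rewrites this as an inner-product bound via linearity of expectation, and then passes to the cosine using $\|\hat f_e(\vx')\|\le 1$. You have in fact been more careful than the paper on the two delicate points you flag: the paper's appendix proof of Lemma~\ref{lemma1} is indeed written for a fixed target $\vx_c$ (so your reading is justified), and the paper silently assumes the inner product is nonnegative when dividing by $\|\hat f_e(\vx')\|\le 1$, whereas you correctly restrict to the nonnegative certifying regime.
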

\noindent Denote $2{\Phi }  \left ({\Phi ^{ - 1}}(\hat{S}(\vx))- \epsilon \right)-1$ as the Feature Cosine Similarity Bound (FCSB).
Theorem 3.2 establishes an explicit relationship between the Gaussian robustness score of the original encoder and the FCSB of its smoothed version.
The proof of Theorem~\ref{theorem:1} is in the Supplementary material, Section~2.

\begin{corollary}[\textbf{Certified radius $\mathcal{R}$ for adversarial cosine similarity $\ge$ 0.5}]
\label{corollary:cos-radius}
Let $\vx$ be the clean input, and $\vx'$ be the adversarial input. Then $\mathrm{Cos}(\hat{f}_e(x'),{f}_e(x)) \ge 0.5$, for all $\vx'$ with $\|\vx'-\vx\|_2\le \mathcal{R}$, where:

\begin{equation}
\mathcal{R}
=\Phi^{-1}\!\big(\hat S(\vx)\big)
-\Phi^{-1}(0.75).
\label{eq:certified-radius}
\end{equation}
\end{corollary}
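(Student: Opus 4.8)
The plan is to obtain this corollary as an immediate specialization of Theorem~\ref{theorem:1}, choosing the perturbation budget $\epsilon$ as large as possible while keeping the guaranteed cosine similarity at or above $0.5$. Theorem~\ref{theorem:1} supplies, for every $\vx'$ with $\|\vx'-\vx\|\le\epsilon$, the lower bound $\mathrm{Cos}(\hat f_e(\vx'),f_e(\vx))\ge 2\Phi(\Phi^{-1}(\hat S(\vx))-\epsilon)-1$, i.e.\ the FCSB. So it suffices to identify the threshold on $\epsilon$ below which this FCSB stays at least $0.5$.

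First I would impose $2\Phi(\Phi^{-1}(\hat S(\vx))-\epsilon)-1\ge 0.5$, which rearranges to $\Phi(\Phi^{-1}(\hat S(\vx))-\epsilon)\ge 0.75$. Because $\Phi$ is strictly increasing, its inverse $\Phi^{-1}$ is also strictly increasing, so applying $\Phi^{-1}$ to both sides preserves the inequality and yields $\Phi^{-1}(\hat S(\vx))-\epsilon\ge\Phi^{-1}(0.75)$. Solving for $\epsilon$ gives $\epsilon\le\Phi^{-1}(\hat S(\vx))-\Phi^{-1}(0.75)$, which is exactly the claimed radius $\mathcal{R}$ of \equationautorefname~\ref{eq:certified-radius}. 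Hence for all $\vx'$ with $\|\vx'-\vx\|_2\le\mathcal{R}$, Theorem~\ref{theorem:1} certifies $\mathrm{Cos}(\hat f_e(\vx'),f_e(\vx))\ge 0.5$, completing the argument.

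There is no serious obstacle here; the entire argument is a one-line monotonicity deduction from the already-established FCSB. The only points I would verify carefully are well-definedness and non-triviality. Since $S_{\vx}(\cdot)\in[0,1]$ by \equationautorefname~\ref{eq:score_func_cos} and $\hat S(\vx)$ is its Gaussian expectation, we have $\hat S(\vx)\in[0,1]$, so $\Phi^{-1}(\hat S(\vx))$ is finite whenever $\hat S(\vx)\in(0,1)$. The radius $\mathcal{R}$ is strictly positive precisely when $\hat S(\vx)>0.75$; when $\hat S(\vx)\le 0.75$ we have $\mathcal{R}\le 0$ and the statement holds vacuously. This edge case reinforces the message of Theorem~\ref{theorem:1} that boosting $\hat S(\vx)$ directly enlarges the certified radius, which is exactly what the GSB module is designed to achieve.
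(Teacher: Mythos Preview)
Your proposal is correct and matches the paper's intended derivation: the paper presents Corollary~\ref{corollary:cos-radius} as an immediate consequence of Theorem~\ref{theorem:1} without a separate proof, and your argument is precisely the one-line monotonicity deduction (set the FCSB equal to $0.5$, invert $\Phi$, and solve for $\epsilon$) that the paper implicitly relies on. Your additional remarks on well-definedness and the $\hat S(\vx)\le 0.75$ edge case are accurate and go slightly beyond what the paper states explicitly.
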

Building upon Theorem~\ref{theorem:1}, Corollary~\ref{corollary:cos-radius} establishes a certified radius $\mathcal{R}$ for the smoothed feature encoder.
For any adversarial perturbation satisfying $\|\delta\|_2 \le \mathcal{R}$, the FCSB of the smoothed encoder is guaranteed to remain above 0.5.

Overall, Theorem~\ref{theorem:1} reveals that:
\begin{itemize}

    \item By FS, we can turn any given feature encoder $f_e$ into a smoothed version $\hat{f}_e$ that maintains a FCSB between the adversarial and clean feature representations.

    \item By maximizing the robustness score $\hat{S}(\vx)$ of the given feature encoder $f_e$, we can effectively enhance the value of FCSB derived on its smoothed version.

\end{itemize}
\section{Prediction-wise Certification via FS}
Beyond feature-level certification, the guarantees provided by FS can be propagated to the prediction space by applying smoothing to the deepest-layer feature representations.
Predictions are then produced via an analytically solvable decoding head (\eg, linear or cosine similarity-based head), which enables a closed-form mapping from feature-space certification to prediction-level robustness guarantees.
\subsection{Case Study: Certified Radius on Top-1 Retrieval}
\label{sec:clip_retrieval}
Consider the standard top-1 image--text retrieval setting over a fixed normalized candidate text embeddings $\{\vt_k\}_{k=1}^K$.
Let $\vz=f_e(\vx)$ and $\hat\vz'=\hat f_e(\vx')$ denote the $\ell_2$-normalized clean and smoothed adversarial features, respectively.
For a given input $\vx$, the retrieval score $s_k(\vz)$ for candidate text $k$ is given by the cosine similarity, which is:
\begin{equation}
   s_k(\vz) = \vz^\top \vt_k.
\end{equation}
The index of the clean top-1 retrieved text is therefore:
\begin{equation}
    y=\arg\max_{k\in\{1,\dots,K\}} s_k(\vz),
\end{equation}
and we assume that the clean top-1 retrieval result is unique.
Define the text--text correlation between text embeddings $\vt_y$ and $\vt_k$ as $\rho_{y,k}$:
\begin{equation}
    \rho_{y,k}=\vt_y^\top \vt_k.
\end{equation}
Let $a_y= s_y(\vz), \quad a_k= s_k(\vz).$
For each competitor $k\neq y$, based on the geometry of pairwise retrieval boundaries, we can derive a closed-form solution $\gamma_{y,k}$ for the largest cosine similarity between $\vz$ and any unit vector on the pairwise decision boundary between $\vt_y$ and $\vt_k$:
\begin{equation}
\label{eq:gamma-pairwise}
    \gamma_{y,k}
    =
    \sqrt{
    1-\frac{(a_y-a_k)^2}{2(1-\rho_{y,k})}
    }.
\end{equation}
Detailed explanation of Equation~\ref{eq:gamma-pairwise} can be found in the Supplementary material, Section C.
Equivalently, if $\vz^\top \hat\vz' > \gamma_{y,k}$, then the pairwise ranking between $\vt_y$ and $\vt_k$ for adversarial feature $\hat\vz'$ is preserved, i.e.,$ \hat\vz'^\top \vt_y > \hat\vz'^\top \vt_k.$
We therefore define the top-1 critical cosine threshold as
\begin{equation}
    \gamma_{\mathrm{top1}}^*
    =
    \max_{k\neq y} \gamma_{y,k}.
\end{equation}
Thus, whenever $\vz^\top \hat\vz' > \gamma_{\mathrm{top1}}^*$, the clean top-ranked text $\vt_y$ remains more similar than every competing text under the adversarial feature $\hat\vz'$. 

\begin{figure*}[t]
    \centering
   \includegraphics[width=0.98\linewidth]{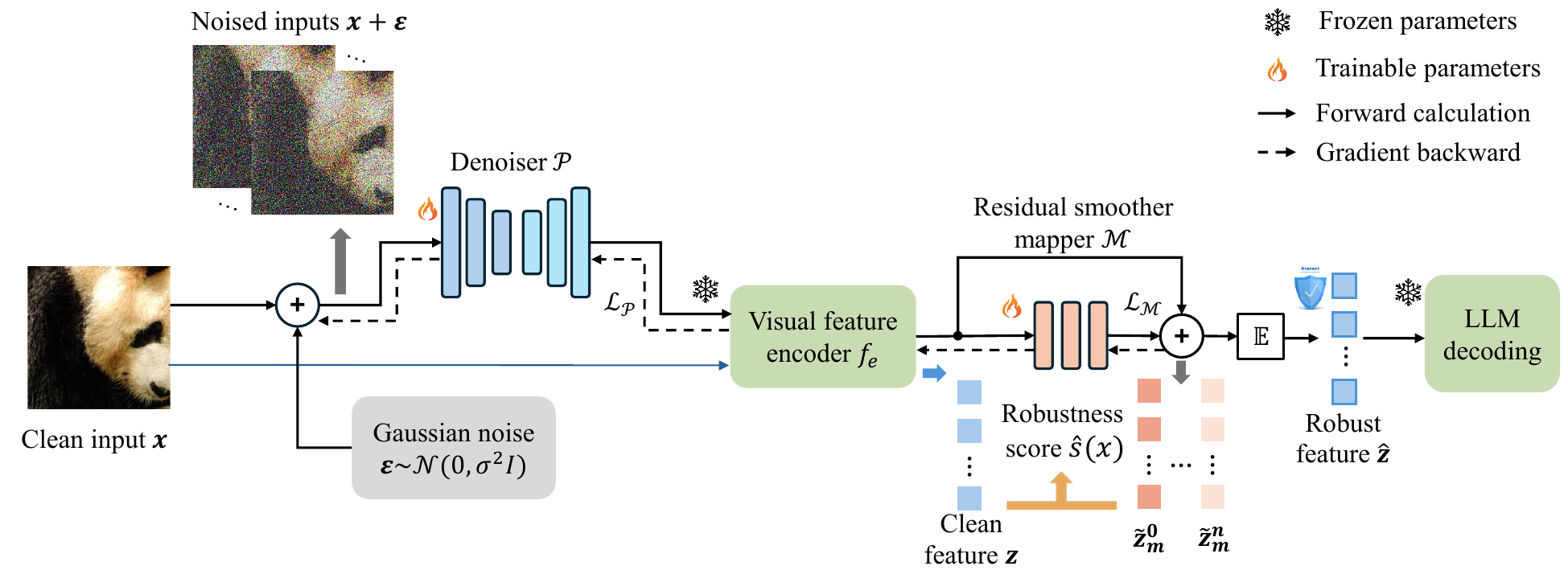}
   \caption{The training framework of the GSB. The denoiser performs pre-processing, and the smoothness mapper refines post-extracted features to enhance the Gaussian robustness. Parameters of MLLMs are frozen, and the denoiser and mapper are optimized with $\mathcal{L}_\mathcal{P}$ and $\mathcal{L}_\mathcal{M}$. For evaluation, $\vx$ is replaced with adversarial input $\vx'$, and the forward calculation marked in the blue line will be removed. }
   \label{fig:flow_chat_df}
    \vspace{-5mm}
\end{figure*}

\begin{corollary}[\textbf{FS Certified radius $\mathcal{R}_{\mathrm{ret}}$ on top-1 retrieval}]
\label{thm:pred-cert-retrieval}
Let $y=\arg\max_k \vz^\top \vt_k$ denote the clean top-1 retrieved text.
For any adversarial input satisfying $\|\vx'-\vx\|_2 < \mathcal{R}_{\mathrm{ret}}$, the top-1 retrieval result on the smoothed adversarial feature is preserved, i.e., $\arg\max_k \hat\vz'^\top \vt_k = y$, where
\begin{equation}
\label{eq:eps-condition-retrieval}
    \mathcal{R}_{\mathrm{ret}}
    =
    \Phi^{-1}(\hat S(\vx))
    -
    \Phi^{-1}\!\left(
    \frac{1+\gamma_{\mathrm{top1}}^*}{2}
    \right).
\end{equation}
\end{corollary}
Corollary~\ref{thm:pred-cert-retrieval} is an extension of Corollary~\ref{corollary:cos-radius} on the top-1 retrieval setting.
Detailed proof of Corollary~\ref{thm:pred-cert-retrieval} can be found in the Supplementary material, Section 3.
It indicates that the FS-certified retrieval radius is jointly determined by the intrinsic robustness score $\hat S(\vx)$ and the geometric structure of the clean retrieval configuration.
Specifically, a larger Gaussian robustness score $\hat S(\vx)$ and a greater margin separating the retrieved text from all competing texts lead to a larger certified radius.

\subsection{General Prediction-level Cosine Bound via FS}
\label{sec:general-target-cert}

The top-1 retrieval result in Section~\ref{sec:clip_retrieval} is obtained by combining the FCSB with the geometry of pairwise retrieval boundaries.
\textbf{{Generally, the FS guarantee can be propagated to tasks where the downstream prediction can be represented as a normalized vector and the adversarial prediction is guaranteed to remain within a bounded cosine distance from the clean prediction.}}
Specifically, for an input $\vx$, let $\vz$ denote the normalized clean prediction and let $\vt_y$ denote the normalized target prediction. 
Let $a_y = \vz^\top \vt_y$ be the cosine similarity between the clean prediction and the target prediction and $\gamma$ be the FCSB derived in Theorem~\ref{theorem:1}.

\begin{theorem}[\textbf{General target-level cosine similarity bound}]
\label{thm:general-target-bound}
For adversarial input satisfying $\|\vx'-\vx\|_2\le \epsilon$, consider the worst-case scenario where the FCSB is attained, i.e., $\mathrm{Cos}(\hat\vz',\vz)=\gamma$, the cosine similarity between the normalized adversarial prediction $\hat\vz'$ and the targeted prediction $\vt_y$ satisfies:
\begin{equation}
\label{eq:general-target-bound-compact}
    \left|
    \hat\vz'^\top \vt_y-\gamma a_y
    \right|
    \le
    \sqrt{(1-\gamma^2)(1-a_y^2)}.
\end{equation}
\end{theorem}
Proof of Theorem~\ref{thm:general-target-bound} can be found in the Supplementary material, Section 4.
It provides a target-level guarantee on the adversarial prediction.
For any perturbation satisfying $\|\vx'-\vx\|_2\le\epsilon$, the cosine similarity between the normalized adversarial prediction $\hat{\vz}'$ and the target prediction $\vt_y$ is confined to an interval centered at $\gamma a_y$, with radius $\sqrt{(1-\gamma^2)(1-a_y^2)}$.
Consequently, both a larger FCSB $\gamma$ and a larger clean target similarity $a_y$ tighten the admissible range of $\hat{\vz}'^\top \vt_y$, thereby making the target prediction more stable under adversarial perturbations.
In particular, when either $\gamma$ approaches $1$ or $a_y$ is close to $1$, the uncertainty interval shrinks, implying that the adversarial prediction is forced to remain close to the clean target prediction.
Therefore, Theorem~\ref{thm:general-target-bound} yields a general target-level certification: it precisely quantifies the set of cosine similarities that any adversarial output can attain with respect to the target prediction.

\noindent\textbf{Discussion on potential application scenarios.}
Theorem~\ref{thm:general-target-bound} applies to any downstream task whose prediction can be represented in a normalized embedding space.
In such settings, FS certifies the similarity preserved between the adversarial prediction and the desired target embedding.
Potential scenarios include Top-k retrieval tasks, multi-target prediction, and dense prediction problems whose outputs can be flattened and normalized into a common embedding space.
Thus, the feature-space certification provided by FS naturally extends to a broad family of prediction-level robustness guarantees beyond the specific top-1 retrieval case considered above.

\vspace{-2mm}
\section{Gaussian Smoothness Booster}
\label{sec: GSB}
Theorem~\ref{theorem:1} and Theorem~\ref{thm:general-target-bound} reveal an intriguing robustness property of the proposed FS framework.
However, feature encoders $f_e$ of most pre-trained large models (\eg, MLLMs) generally exhibit limited Gaussian robustness, which restricts the value of FCSB derived on $\hat{f}_e$.
One solution is to estimate the smoothness score $\hat{S}(\vx)$ through Monte Carlo sampling and re-train $f_e$ to maximize this score $\hat{S}(\vx)$ via gradient backpropagation.
However, this could {{reduce the adaptability and practicality, as fine-tuning and re-aligning the large pre-trained model is highly complex and costly}}.

This thereby motivates us to propose the Gaussian Smoothness Booster (GSB), a plug-and-play module that can be seamlessly integrated with existing models to enhance its Gaussian robustness score $\hat{S}(x)$.
The training framework of our proposed GSB is shown in Figure~\ref{fig:flow_chat_df}.
The denoiser $\mathcal{P}$ operates before feature extraction to denoise the Gaussian perturbations, and the smoothness mapper $\mathcal{M}$ performs post-extraction to do the feature refinement. 
Those two modules work together to enhance the Gaussian robustness of the given feature encoder.

\subsection{Gaussian Denoiser} 
To perform the Gaussian denoising, the denoiser $\mathcal{P}$ is trained to minimize the reconstruction loss $l_{mse}$, defined as:
\begin{equation}
l_{mse}=\mathop{\mathbb{E}}\limits_{\substack{\vx \sim \mathcal{D},~\bm{\varepsilon} \sim \mathcal{N}(0, \sigma^2I)}}\|\vx-\mathcal{P}(\vx+\mathbf{\varepsilon})\|,    
\label{eq: loss mse}
\end{equation}
where $\mathcal{D}$ represents the data distribution. 
Meanwhile, to further enhance robustness score after plugging $\mathcal{P}$, we also introduce a robustness loss $l^{\mathcal{P}}_{rb}$, defined as:

\begin{equation}
l^{\mathcal{P}}_{\mathrm{rb}}
= 1-\mathop{\mathbb{E}}\limits_{\vx \sim \mathcal{D},~ \bm{\varepsilon} \sim \mathcal{N}(0, \sigma^2I)}
\big[\mathrm{Cos}\big(f_e(\mathcal{P}(\vx+\bm{\varepsilon})), f_e(\vx)\big)\big],
\label{eq:loss smooth}
\end{equation}
which encourages feature consistency between the purified and clean representations.
%
%
We then {train} this model on the dataset $\mathcal{D}$ using the purifier loss $\mathcal{L}_\mathcal{P}$, which is:

\begin{equation}
\mathcal{L}_\mathcal{P}=\ l^{\mathcal{P}}_{\mathrm{rb}}+\lambda_1 l_{mse},
\label{eq:loss for purifier}
\end{equation}
where 
$\lambda_1$ is the weighting coefficient. 
In our experiments, we employ a lightweight U-Net architecture with approximately 10M parameters as the denoising module.
More details of the denoiser $\mathcal{P}$ are provided in Supplementary material, Section 5.

\vspace{-2mm}
\subsection{Residual Smoothness Mapper}
For the feature post-refinement, we utilize a noise-aware residual module $\mathcal{M}$ to enhance feature robustness while preserving its statistical distribution.
The main process of this mapper can be formulated as:

\begin{equation}
    \tilde{\vz}_m = \tilde{\vz}+\mathcal{M}(\tilde{\vz},\sigma)=\tilde{\vz}+ \sum_{i=0}^{k-1}m_i\left(\tilde{\vz}_i,\sigma\right),
    \label{eq: delta z mapper}
\end{equation}
where $\tilde{\vz}_m$ is the Mapper $\mathcal{M}$ refined feature, $\tilde{\vz}=f_e(\mathcal{P}(\vx+\mathbf{\varepsilon}))$ denotes the purified feature and $\tilde{\vz}_{i+1}=m_i(\tilde{\vz}_i,\sigma)$ is the intermediate output with $\tilde{\vz}_0=\tilde{\vz}$.
$\sigma$ is the noise strength that adaptively controls the output magnitude of the mapper.
$k$ is the number of blocks ($k=3$ in our experiments unless otherwise specified), and each block $m_i(\cdot)$ contains multi-head attention, depthwise convolution, and MLP branches to refine the purified representation.
%
%
To enhance the Gaussian robustness of the refined representation, we introduce the mapper robustness loss $l^\mathcal{M}_\mathrm{rb}$, defined as:

\begin{equation}
l^{\mathcal{M}}_{\mathrm{rb}}
= 1-\mathop{\mathbb{E}}\limits_{\vx \sim \mathcal{D},~ \bm{\varepsilon} \sim \mathcal{N}(0, \sigma^2I)}
\big[\mathrm{Cos}\big(\tilde{\vz}_m , f_e(\vx)\big)\big],
\label{eq:loss smooth for mapper}
\end{equation}
which encourages feature consistency between the refined and clean representations.
Meanwhile, to ensure that the refined feature preserves the statistical characteristics of the clean feature, we introduce two regularization terms: the identical loss $l_\mathrm{id}$ and the statistical loss $l_\mathrm{stats}$, defined as:
\begin{equation}
\left\{
\small
\begin{aligned}
&l_{\mathrm{stats}}
= \mathop{\mathbb{E}}\limits_{\substack{\vx \sim \mathcal{D},\\ \bm{\varepsilon} \sim \mathcal{N}(0, \sigma^2I)}}
\frac{1}{D}\sum_{d=1}^{D}
\Big[
(\mu_{\tilde{\vz}_m}^{(d)}-\mu_{\vz}^{(d)})^2 + (\sigma_{\tilde{\vz}_m}^{(d)}-\sigma_{\vz}^{(d)})^2
\Big], \\ 
&l_{\mathrm{id}} 
= \mathop{\mathbb{E}}\limits_{\substack{\vx \sim \mathcal{D}}}\|\mathcal{M}(\tilde{\vz},0)\|_2^2. 
\end{aligned}
\right.
\label{eq: loss id and stats}
\end{equation}
where $\tilde{\vz}_m,\vz \in \mathbb{R}^{B\times L \times D}$ with batch size $B$, token number $L$ and feature dimension $D$.
The $l_{\mathrm{stats}}$ enforces consistency between the element-wise mean $\mu^{d}$ and standard deviation $\sigma^{d}$ of two representations, thereby preserving the statistical characteristics.
Meanwhile, the identity loss $l_{\mathrm{id}}$ constrains the mapping network $\mathcal{M}$ when the noise strength $\sigma=0$, promoting stability and preventing undesired distortions on clean inputs.
The overall training loss $\mathcal{L}_\mathcal{M}$ is defined as:

\begin{equation}
\mathcal{L}_\mathcal{M}= l^{\mathcal{M}}_{\mathrm{rb}}+\lambda_2 l_\mathrm{stats} +\lambda_3l_\mathrm{id},
\label{eq:loss for mapper}
\end{equation}
where $\lambda_2$ and $\lambda_3$ are the weighting coefficients.
More details of the training process on the residual smoothness mapper can be found in the supplementary material, Section 6.

\begin{algorithm}[t]
\caption{Training algorithm of GSB}
\label{alg:PSM}
\small
\begin{algorithmic}[1]
\REQUIRE Dataset $\mathcal{D}$, feature encoder $f_e$, Denoiser $\mathcal{P}$, mapper $\mathcal{M}$,
sampling number $n_0$, noise std $\sigma$, loss weights $\lambda_1,\lambda_2,\lambda_3$.
\STATE \textbf{Stage 1: Train denoiser $\mathcal{P}$.}
\FOR{each batch $\vx \sim \mathcal{D}$}
    \STATE $\vz \leftarrow f_e(\vx)$
    \FOR{$i = 1$ to $n_0$}
        \STATE Sample $\bm{\varepsilon}^i \sim \mathcal{N}(0, \sigma^2 I)$
        \STATE $\tilde{\vx}^i \leftarrow \mathcal{P}(\vx + \bm{\varepsilon}^i)$
        \STATE $\tilde{\vz}^i \leftarrow f_e(\tilde{\vx}^i)$
    \ENDFOR
    \STATE Compute $l^{\mathcal{P}}_{\mathrm{rb}}$ and $l_{\mathrm{mse}}$ using
    Eqs.~\ref{eq: loss mse} and~\ref{eq:loss smooth}.
    \STATE Update $\mathcal{P}$ by gradient descent on $\nabla_{\mathcal{P}} \mathcal{L}_{\mathcal{P}}$
\ENDFOR

\STATE \textbf{Stage 2: Train smoothness mapper $\mathcal{M}$.}
\FOR{each batch $\vx \sim \mathcal{D}$}
    \STATE $\vz \leftarrow f_e(\vx)$
    \FOR{$i = 1$ to $n_0$}
        \STATE Sample $\bm{\varepsilon}^i \sim \mathcal{N}(0, \sigma^2 I)$
        \STATE $\tilde{\vx}^i \leftarrow \mathcal{P}(\vx + \bm{\varepsilon}^i)$
        \STATE $\tilde{\vz}^i \leftarrow f_e(\tilde{\vx}^i)$
        \STATE $\tilde{\vz}^{i}_{m} \leftarrow \tilde{\vz}^i + \mathcal{M}(\tilde{\vz}^i,\sigma)$
    \ENDFOR
    \STATE Compute $l^{\mathcal{M}}_{\mathrm{rb}}$, $l_{\mathrm{id}}$, and $l_{\mathrm{stats}}$ using
    Eqs.~\ref{eq:loss smooth for mapper} and~\ref{eq: loss id and stats}.
    \STATE Update $\mathcal{M}$ by gradient descent on $\nabla_{\mathcal{M}} \mathcal{L}_{\mathcal{M}}$
\ENDFOR
\end{algorithmic}
\end{algorithm}

\subsection{Further Discussion on Plugging GSB}

\textbf{Certified robustness for the encoder $f_e$ with GSB.} 
Let $f'_e$ denote the feature encoder integrated with the proposed GSB.
Then the forward process can be formulated as: 

\begin{equation}
    {f}'_e(\bm{x}+\bm{\varepsilon})= f_e(\mathcal{P}(\vx+\bm{\varepsilon})) + \mathcal{M}(\tilde{\vz},\sigma).
    \label{eq:forward psm eqp f_e}
\end{equation}
Under this condition, the smoothed feature encoder and smoothness score are defined as:

\begin{equation}
\begin{aligned}
&\hat{f}'_e(\bm{x}) = \frac{1}{(2\pi)^{d/2}}
   \int_{\mathbb{R}^d} 
   f'_e(\bm{x} + \bm{\varepsilon})
   \exp\left(-\tfrac{1}{2}\|\bm{\varepsilon}\|^2\right)
   d\bm{\varepsilon}, \\ 
&\hat{S}'(\bm{x}) 
=\tfrac{1}{2}\left(
1+\mathop {\mathbb{E}}\limits_{\bm{\varepsilon}\sim\mathcal N(0, \sigma^2I)}
\left[{Cos}\big(f'_e(\vx+\bm{\varepsilon}),\,f_e(\vx)\big)\right]\right),    
\end{aligned}
\label{eq: score-encoder with PSM}
\end{equation}
where $\hat{S}'(\bm{x}) \in [0,1]$.
We can prove that the Lipschitz property derived in Lemma~\ref{lemma1} still holds, and
the theoretical bound in Section~\ref{sec: feature space smoothing} remains valid.
Then, utilizing Theorem~\ref{theorem:1} and Theorem~\ref{thm:general-target-bound}, we can derive the certified lower bound on $\mathrm{Cos}(\hat{f}'_e(\bm{x'}), f_e(\bm{x}))$ for any adversarial input $\vx'$.

\noindent\textbf{Training algorithm of GSB.}
The training procedure of GSB is summarized in Algorithm~\ref{alg:PSM}, where the denoiser $\mathcal{P}$ and the smoothness mapper $\mathcal{M}$ are trained sequentially via a two-stage manner.
The expectation over Gaussian perturbations is approximated by Monte Carlo sampling with $n_0$ samples drawn from $\mathcal{N}(0,\sigma^2 I)$, where we set $n_0=8$ in our training to balance efficiency and estimation accuracy.
%
%
\setcounter{footnote}{1}

\begin{table}[t]
\renewcommand\arraystretch{1.1}
\centering
\caption{Certified protection success rate under different adversarial attack bounds $\epsilon$ and the average certified radius $\mathcal{R}$ for $\mathrm{FCSB} \ge 0.5$.}
\vspace{-3mm}
\resizebox{0.49\textwidth}{!}{
\begin{tabular}{c|c|c|cccccc|c}
\toprule
\multirow{2}*{$\sigma$} & \multirow{2}*{Encoder} & \multirow{2}*{GSB} &
\multicolumn{6}{c|}{Certified PSR (\%) at adversarial bound $\epsilon$} &
\multirow{2}*{Avg. $\mathcal{R}$} \\
\cmidrule{4-9}
& & & 0.125 & 0.25 & 0.375 & 0.50 & 0.625 & 0.75 & \\
\midrule

\multirow{6}*{0.25}
& \multirow{2}*{CLIP}
    & w/o & 41.0 & 0.1 & 0.0 & 0.0 & 0.0 & 0.0 & 0.118 \\
&   & w/  & 99.0 & 63.0 & 2.0 & 0.0 & 0.0 & 0.0 & 0.267 \\
\cmidrule{2-10}

& \multirow{2}*{SigLIP 2}
    & w/o & 59.2 & 2.0 & 0.0 & 0.0 & 0.0 & 0.0 & 0.130 \\
&   & w/  & 99.7 & 81.6 & 10.4 & 0.0 & 0.0 & 0.0 & 0.304 \\
\cmidrule{2-10}

& \multirow{2}*{EVA-CLIP}
    & w/o & 16.2 & 0.0 & 0.0 & 0.0 & 0.0 & 0.0 & 0.068 \\
&   & w/  & 93.8 & 53.4 & 2.0 & 0.0 & 0.0 & 0.0 & 0.248 \\
\midrule

\multirow{6}*{0.50}
& \multirow{2}*{CLIP}
    & w/o & 35.6 & 2.6 & 0.0 & 0.0 & 0.0 & 0.0 & 0.101 \\
&   & w/  & 99.5 & 94.3 & 72.0 & 37.2 & 9.2 & 0.8 & 0.455 \\
\cmidrule{2-10}

& \multirow{2}*{SigLIP 2}
    & w/o & 51.8 & 5.6 & 0.3 & 0.0 & 0.0 & 0.0 & 0.134 \\
&   & w/  & 99.7 & 98.1 & 85.6 & 56.4 & 19.4 & 2.3 & 0.514 \\
\cmidrule{2-10}

& \multirow{2}*{EVA-CLIP}
    & w/o & 2.6 & 0.1 & 0.0 & 0.0 & 0.0 & 0.0 & 0.009 \\
&   & w/  & 90.9 & 74.0 & 50.0 & 22.0 & 4.1 & 0.0 & 0.363 \\
\bottomrule
\end{tabular}}
\vspace{-3mm}
\label{tab:protection_success_rate}
\end{table}
\section{Experimental Results on Certified Robustness}
\label{sec: exp_cert}
To evaluate the performance of the proposed FS framework and the GSB module, we consider three vision-language feature encoders as the vanilla encoder $f_e$, including CLIP (ViT-L/14)~\cite{radford2021learning}, SigLIP 2 (So400m/14)~\cite{tschannen2025siglip}, and EVA-CLIP (ViT-B/16)~\cite{sun2023eva}. 
These models span different training objectives and architectural families, allowing us to assess the generality of FS across different models.
Our evaluation comprises three parts: (1) feature-wise certified robustness, which reports the feature-space robustness certification among different certified radii; (2) prediction-wise certified robustness for image classification on the ImageNet Dataset~\cite{deng2009imagenet} and image-text retrieval on the MS COCO Dataset~\cite{lin2014microsoft};
(3) empirical robustness, which reports the empirical performance against white-box feature-space adversarial attacks.


\subsection{Feature-wise Certification}
\label{sec:feature-wise rob}
\subsubsection{Experimental Setup}
All backbone encoders, including CLIP (ViT-L/14)~\cite{radford2021learning}, SigLIP 2 (So400m/14)~\cite{tschannen2025siglip}, and EVA-CLIP (ViT-B/16)~\cite{sun2023eva}, are kept frozen throughout both training and evaluation.
Unless otherwise specified, the global image embedding produced by each encoder is deemed as the protected feature via FS.
For each encoder, we train an independent GSB module in a self-supervised manner using Algorithm~\ref{alg:PSM}. 
For each encoder and noise level, we estimate the Gaussian robustness score using Monte Carlo sampling with $N = 10{,}000$ Gaussian samples per image. 
Considering the huge cost of Monte Carlo sampling, training is performed on 5,000 images sampled from 50 ImageNet classes in the training set. 
Evaluation uses a disjoint set of 2,500 images from the ImageNet validation set, restricted to the same classes.
%

%
\noindent\textbf{Metrics.} We report two feature-wise \textbf{certified metrics}. 1) The certified protection success rate (PSR) under perturbation bound $\epsilon$, defined as the proportion of samples whose $\mathrm{FCSB} > 0.5$ under the specified attack budget. 2) The average certified radius $\mathcal{R}$, defined in Equation~\ref{eq:certified-radius}.
Additionally, we report one feature-wise \textbf{empirical metrics}. 3) empirical PSR at the attack bound $\epsilon$, which evaluates the proportion of samples whose feature cosine similarity remains greater than $0.5$ against actual white-box adversarial examples. 
%




\begin{table}[t]
\renewcommand\arraystretch{1.1}
\centering
\caption{Empirical protection success rate under different adversarial attacks with perturbation bound $\epsilon$.}
\vspace{-3mm}
\resizebox{0.49\textwidth}{!}{
\begin{tabular}{c|c|c|cccccc}
\toprule
\multirow{2}*{$\sigma$} & \multirow{2}*{Attack} & \multirow{2}*{Encoder} &
\multicolumn{6}{c}{Empirical PSR (\%) at adversarial bound $\epsilon$} \\
\cmidrule{4-9}
& & & 0.5 & 1 & 2 & 4 & 8 & 16 \\
\midrule

\multirow{6}*{0.50}
& \multirow{3}*{PGD}
& CLIP w/ GSB     & 100.0 & 100.0 & 100.0 & 98.8 & 79.8 & 34.2 \\
&
& SigLIP 2 w/ GSB & 100.0 & 100.0 & 100.0 & 97.8 & 80.0 & 30.4 \\
&
& EVA-CLIP w/ GSB & 95.8 & 73.6 & 38.2 & 8.4 & 0.6 & 0.0 \\
\cmidrule{2-9}

& \multirow{3}*{APGD}
& CLIP w/ GSB     & 100.0 & 100.0 & 98.0 & 58.2 & 5.4 & 0.6 \\
&
& SigLIP 2 w/ GSB & 100.0 & 100.0 & 97.0 & 45.8 & 2.4 & 0.4 \\
&
& EVA-CLIP w/ GSB & 92.2 & 57.2 & 30.2 & 5.0 & 0.2 & 0.2 \\
\bottomrule
\end{tabular}}
\label{tab:empirical_psr}
\end{table}

\begin{figure}[ht]
    \centering
    \vspace{-1mm}
    \includegraphics[width=0.95\linewidth]{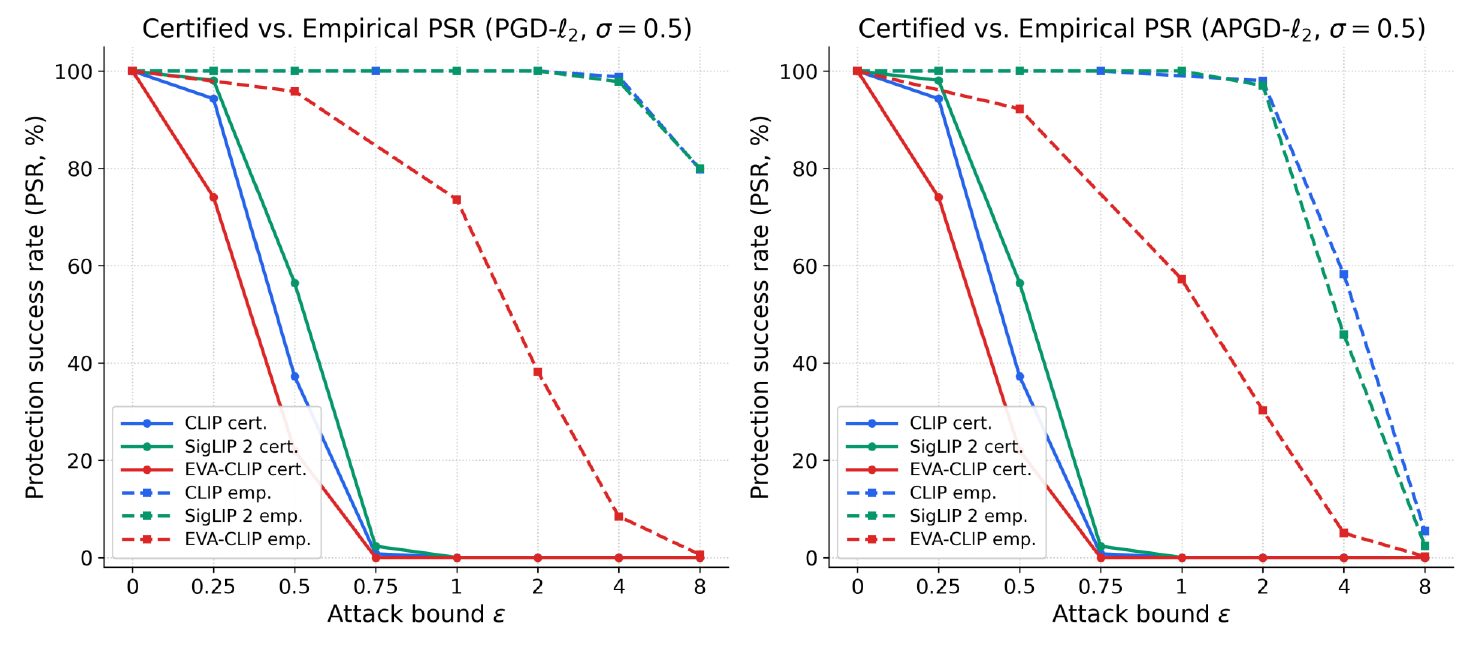}
    \vspace{-3mm}
\caption{
Certified and empirical PSR under different attack bounds $\epsilon$.
The certified PSR represents the theoretical worst-case robustness guarantee, while the empirical PSR is measured using white-box generated adversarial examples.
The results show that the certified guarantee provides a conservative measure of the empirical robustness.
}
    \vspace{-4mm}
    \label{fig:cert_vs_emp}
\end{figure}
\subsubsection{Experimental Results}
\textbf{Experimental results on certified robustness.}
The results of the certified feature-space robustness are shown in Table~\ref{tab:protection_success_rate}.
Across all three encoders, the proposed FS framework provides non-trivial certified protection, and incorporating GSB consistently yields substantial improvements.
The gains are particularly pronounced at larger smoothing noise. 
When $\sigma=0.5$, GSB increases the average certified radius from $0.101$ to $0.455$ for CLIP, from $0.134$ to $0.514$ for SigLIP~2, and from $0.009$ to $0.363$ for EVA-CLIP. 
We also observe that the final certified robustness remains encoder-dependent.
Encoders with stronger intrinsic Gaussian robustness, such as SigLIP~2, attain larger certified radii after incorporating GSB. 
This suggests that the effectiveness of the proposed FS framework is fundamentally tied to the encoder's intrinsic Gaussian robustness.
Consequently, improving Gaussian robustness at the encoder level appears to be a promising direction for achieving stronger feature-space certification.

\vspace{1pt}
\noindent\textbf{Experimental results on empirical robustness.}
We further evaluate the protection provided by FS under actual white-box adversarial attacks, where PGD-$\ell_2$ and AutoAttack-style adaptive APGD-$\ell_2$ are performed.
APGD-$\ell_2$ follows the AutoAttack-style adaptive step-size strategy and generally provides a stronger optimization baseline.
Both attacks get full white-box access to the protected feature extractor, including the denoiser, mapper, and frozen encoder.
Specifically, given the protected feature extractor, we consider a feature-space attack whose objective is to minimize the cosine similarity between the adversarial feature and the clean feature. 
This attack directly targets the quantity that our feature-wise certification aims to protect.

We evaluate attack bounds of $\epsilon \in {0.5, 1, 2, 4, 8, 16}$ across two adversarial attack methods and three models in the validation set.
This configuration yields a total of 90,000 generated adversarial images.
The empirical results are shown in Table~\ref{tab:empirical_psr}, which broadly align with the certified robustness trends in Table~\ref{tab:protection_success_rate}.
The results show that the proposed FS framework with GSB provides strong practical resistance against feature-space white-box attacks.
For example, under the stronger APGD-$\ell_2$ attack, CLIP and SigLIP~2 both achieve an empirical PSR of $1.000$ at $\epsilon=1$, and PSR values of $0.980$ and $0.970$ at $\epsilon=2$, respectively.
This demonstrates that the certified feature-space protection is not merely a theoretical guarantee, but also translates into strong empirical robustness against direct feature-space attacks.
Meanwhile, in Figure~\ref{fig:cert_vs_emp}, we compare the certified and empirical PSR curves under different attack bounds $\epsilon$. 
The results show that the certified guarantee provides a conservative measure of the empirical robustness.

Overall, the results support two conclusions.
First, the proposed FS framework with GSB not only improves the certified lower bound but also provides practical resistance against strong feature-space white-box attacks.
Second, empirical robustness provided by FS remains closely related to the intrinsic Gaussian robustness of the underlying encoder.
Encoders that obtain stronger certified radii, such as SigLIP~2 and CLIP, also tend to retain higher PSR under direct optimization attacks.
This consistency between certified and empirical evaluations suggests that improving Gaussian feature stability is an effective path toward robust and certifiable vision representations.

\newcolumntype{C}[1]{>{\centering\arraybackslash}p{#1}}
\begin{table}[t]
\renewcommand\arraystretch{1.1}
\centering
\caption{Prediction-wise certified robustness on ImageNet classification under different adversarial attack bounds $\epsilon$.}
\vspace{-3mm}
\resizebox{0.49\textwidth}{!}{
\begin{tabular}{c|c|c|c|*{5}{C{0.65cm}}|c}
\toprule
\multirow{2}*{$\sigma$} & \multirow{2}*{Encoder} & \multirow{2}*{Method} &
\multirow{2}*{\makecell{Clean\\Acc.}} &
\multicolumn{5}{c|}{Certified Acc. (\%) at adversarial bound $\epsilon$} &
\multirow{2}*{Avg. $\mathcal{R}$} \\
\cmidrule{5-9}
& & & & 0 & 0.25 & 0.50 & 0.75 & 1.00 & \\
\midrule

\multirow{6}*{0.50}
& \multirow{2}*{\makecell{CLIP \\w/ GSB}}
    & FS & 95.9 & 86.7 & 79.0 & 72.6 & 63.6 & 50.0 & 0.82 \\
&   & RS &95.6 & 86.8 & 79.0 & 73.3 & 63.2 & 53.0 & 0.83 \\
\cmidrule{2-10}

& \multirow{2}*{\makecell{SigLIP 2 \\w/ GSB}}
    & FS & 97.7 & 91.3 & 85.6 & 81.3 & 76.7 & 64.0 & 1.14 \\
&   & RS & 97.6 & 91.3 & 86.2 & 80.9 & 75.0 & 67.6 & 1.23\\
\cmidrule{2-10}

& \multirow{2}*{\makecell{EVA-CLIP \\w/ GSB}}
    & FS & 96.6 & 86.4 & 76.3 & 71.3 & 65.3 & 57.3 & 0.91 \\
&   & RS & 96.4 & 86.4 & 76.6 & 71.0 & 63.5 & 54.5 & 0.83 \\
\bottomrule
\end{tabular}}
\vspace{-1mm}
\label{tab:prediction_certification_imagenet}
\end{table}

\begin{figure}[t]
    \centering
    \vspace{-2mm}
    \includegraphics[width=0.93\linewidth]{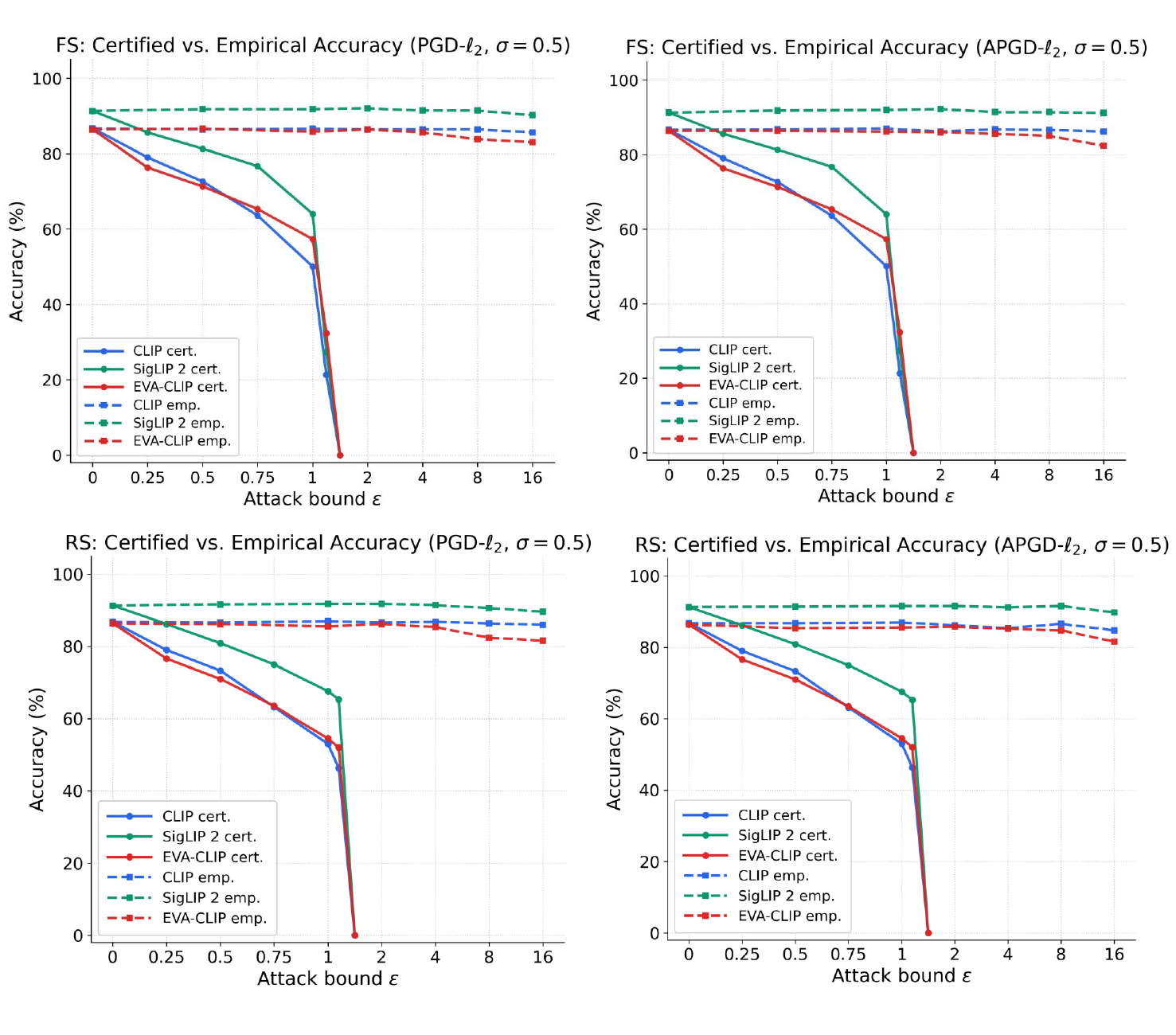}
    \vspace{-4mm}
\caption{
Certified and empirical accuracy of FS and RS under different attack bounds $\epsilon$.
Empirical accuracy is evaluated using adversarial examples generated in a white-box setting. The results indicate that FS provides prediction-level protection comparable to that of RS in both certified and empirical evaluations.
}
    \vspace{-3mm}
    \label{fig:cert_vs_emp_cls}
\end{figure}

\subsection{Prediction-wise Certification}
\subsubsection{Experimental Setup}
We further evaluate the prediction-wise certified robustness of FS on two downstream tasks: ImageNet~\cite{deng2009imagenet} classification and image-text retrieval on MS COCO~\cite{lin2014microsoft}.
For both tasks, the backbone vision encoders are kept frozen, and the same GSB modules used in the feature-wise evaluation are adopted.

\begin{table}[t]
\renewcommand\arraystretch{1.1}
\centering
\caption{Empirical prediction accuracy on ImageNet Classification under different adversarial attacks with perturbation bound $\epsilon$.}
\vspace{-3mm}
\resizebox{0.49\textwidth}{!}{
\begin{tabular}{c|c|c|c|cccccc}
\toprule
\multirow{2}*{$\sigma$} & \multirow{2}*{Attack} & \multirow{2}*{Encoder} & \multirow{2}*{Method} & \multicolumn{6}{c}{Empirical Acc. (\%) at adversarial bound $\epsilon$} \\
\cmidrule{5-10}
& & & & 0.5 & 1 & 2 & 4 & 8 & 16 \\
\midrule
\multirow{12}*{0.50} & \multirow{6}*{PGD} & \multirow{2}*{\makecell{CLIP \\ w/ GSB}} & FS & 86.4 & 86.6 & 86.4 & 86.4 & 86.4 & 85.6 \\
&  &  & RS & 86.6 & 87.0 & 86.6 & 86.8 & 86.4 & 86.0 \\
\cmidrule{3-10}
&  & \multirow{2}*{\makecell{SigLIP 2 \\w/ GSB}} & FS & 91.8 & 91.8 & 92.0 & 91.4 & 91.4 & 90.2 \\
&  &  & RS & 91.6 & 91.8 & 91.8 & 91.4 & 90.6 & 89.6 \\
\cmidrule{3-10}
&  & \multirow{2}*{\makecell{EVA-CLIP \\w/ GSB}} & FS & 86.6 & 85.8 & 86.4 & 85.6 & 83.8 & 83.0 \\
&  &  & RS & 86.2 & 85.6 & 86.2 & 85.4 & 82.4 & 81.6 \\
\cmidrule{2-10}
& \multirow{6}*{APGD} & \multirow{2}*{\makecell{CLIP \\w/ GSB}} & FS & 86.8 & 87.0 & 86.2 & 86.8 & 86.6 & 86.2 \\
&  &  & RS & 86.8 & 87.0 & 86.2 & 85.4 & 86.6 & 84.8 \\
\cmidrule{3-10}
&  & \multirow{2}*{\makecell{SigLIP 2 \\ w/ GSB}} & FS & 91.8 & 92.0 & 92.2 & 91.4 & 91.4 & 91.2 \\
&  &  & RS & 91.4 & 91.6 & 91.6 & 91.2 & 91.6 & 89.8 \\
\cmidrule{3-10}
&  & \multirow{2}*{\makecell{EVA-CLIP w/\\ GSB}} & FS & 86.4 & 86.2 & 86.0 & 85.6 & 85.0 & 82.4 \\
&  &  & RS & 85.4 & 85.6 & 85.8 & 85.2 & 84.8 & 81.6 \\
\bottomrule
\end{tabular}}
\vspace{-3mm}
\label{tab:empirical_cls}
\end{table}

\noindent\textbf{ImageNet classification.}
For the classification task, we use the training and evaluation set as claimed in Section 6.1.1.
To perform the classification tasks, a single-layer cosine prototype classification head is trained on the training data for only one epoch.
Given an image feature $\vz$, the classification score for class $k$ is computed as the cosine similarity between $\vz$ and the learned class prototype $\vt_k$: $s_k(\vz)=\vz^\top \vt_k$.  
%
%
The certified prediction radius is derived using Equation~\ref{eq:eps-condition-retrieval} by combining the FCSB with the pairwise decision-boundary threshold induced by the learned class prototypes.
This yields a certified radius under which the top-1 class is guaranteed to remain unchanged.

\noindent\textbf{MS COCO image--text retrieval.}
For image--text retrieval, we evaluate the zero-shot top-1 and top-5 retrieval robustness on the MS COCO 5K test split.
Each image is paired with a set of candidate text descriptions, and the retrieval score is computed by the cosine similarity between the protected image feature and the text embedding.
The text embeddings are extracted using the corresponding text encoder when available, and are kept fixed during evaluation.
For each image, the retrieved text is identified according to the largest image-text cosine similarity.
We then compute the pairwise decision-boundary threshold between the top-1/top-5 text and every competing candidate text.
Following the prediction-wise certification rule, the top-1 and top-5 retrieval results are certified if the feature-wise lower bound remains larger than the maximum pairwise threshold among all competitors.
%

\noindent\textbf{Baselines.}
We compare FS with randomized smoothing (RS)~\cite{cohen2019certified}, a standard prediction-level certification method, on ImageNet classification.
RS trains a separate smoothed classifier on Gaussian-corrupted inputs and certifies the prediction radius using 10,000 Monte Carlo estimates of the top-class probability.
For a fair comparison, RS uses the same train/test split, the same backbone encoders with the GSB module, and the same Gaussian noise level as FS.

\noindent\textbf{Metrics.}
For both tasks, we report clean accuracy, the certified accuracy under different certified radii, and the average certified radius.
For FS, the certified radius is computed from the Gaussian robustness score and the pairwise decision threshold.
For RS, the certified radius is computed from the lower confidence bound of the top-class probability following the standard randomized smoothing certificate.

\subsubsection{Experimental Results}
\begin{figure*}
    \centering
    \includegraphics[width=0.90\linewidth]
    {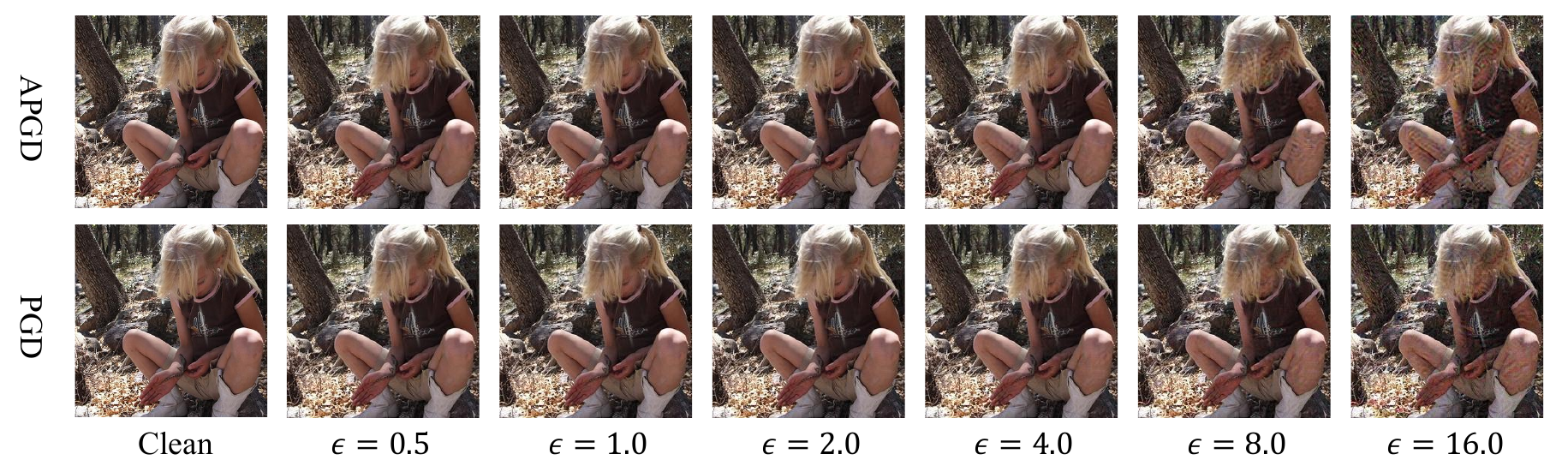}
    \vspace{-3mm}
    \caption{Visualization of adversarial images on ImageNet validation set that were generated by white-box feature-space attack on EVA-CLIP with different attack bounds $\epsilon$.}
    \label{fig:visualization of adv images 1}
    \vspace{-5mm}
\end{figure*}

\begin{table}[t]
\centering
\caption{FS zero-shot prediction-level certification on COCO image-to-text retrieval. We report clean Top-1 and Top-5 retrieval accuracy and certified retrieval accuracy under different adversarial bounds $\epsilon$.}
\vspace{-2mm}
\resizebox{0.49\textwidth}{!}{
\begin{tabular}{c|c|c|c|cccccc|c}
\toprule
\multirow{2}{*}{$\sigma$} &
\multirow{2}{*}{Encoder} &
\multirow{2}{*}{Retr.} &
\multirow{2}{*}{\makecell{Clean \\Retr.Acc.}} &
\multicolumn{6}{c|}{Certified Retr. Acc. at adversarial bound $\epsilon$} &
\multirow{2}{*}{Avg. $\mathcal{R}$} \\
\cmidrule{5-10}
& & & & 0 & 0.125 & 0.25 & 0.375 & 0.50 & 0.75 & \\
\midrule

\multirow{4}{*}{0.125}
& \multirow{2}{*}{\makecell{CLIP \\w/ GSB}}
    & Top-1 & 56.1 & 51.9 & 19.6 & 16.8 & 0.1 & 0.0 & 0.0 & 0.068 \\
&   & Top-5 & 67.2 & 63.2 & 21.6 & 17.1 & 2.4 & 0.0 & 0.0 & 0.075 \\
\cmidrule{2-11}

& \multirow{2}{*}{\makecell{SigLIP 2 \\w/ GSB}}
    & Top-1 & 67.7 & 65.4 & 32.4 & 29.2 & 4.8 & 0.0 & 0.0 & 0.115 \\
&   & Top-5 & 77.9 & 76.0 & 35.5 & 29.8 & 10.2 & 0.0 & 0.0 & 0.126 \\
\midrule

\multirow{4}{*}{0.25}
& \multirow{2}{*}{\makecell{CLIP \\w/ GSB}}
    & Top-1 & 56.1 & 45.2 & 19.6 & 10.8 & 9.3 & 9.3 & 0.12 & 0.100 \\
&   & Top-5 & 67.2 & 63.2 & 17.3 & 12.4 & 9.8 & 9.4 & 0.36 & 0.113 \\
\cmidrule{2-11}

& \multirow{2}{*}{\makecell{SigLIP 2 \\w/ GSB}}
    & Top-1 & 67.7 & 60.1 & 27.3 & 20.6 & 16.3 & 16.0 & 1.2 & 0.141 \\
&   & Top-5 & 77.9 & 71.3 & 29.8 & 23.1 & 17.2 & 16.4 & 2.8 & 0.155 \\
\midrule

\multirow{4}{*}{0.50}
& \multirow{2}{*}{\makecell{CLIP \\w/ GSB}}
    & Top-1 & 56.1 & 26.8 & 5.8 & 4.4 & 4.0 & 3.6 & 2.8 & 0.035 \\
&   & Top-5 & 67.2 & 37.5 & 6.7 & 4.8 & 4.0 & 4.0 & 2.8 & 0.040 \\
\cmidrule{2-11}

& \multirow{2}{*}{\makecell{SigLIP 2 \\w/ GSB}}
    & Top-1 & 67.7 & 43.9 & 13.0 & 10.5 & 8.7 & 7.1 & 5.6 & 0.103 \\
&   & Top-5 & 77.9 & 55.4 & 15.1 & 11.8 & 10.0 & 8.2 & 6.1 & 0.111 \\
\bottomrule
\end{tabular}}
\vspace{-3mm}
\label{tab:prediction_certification_coco}
\end{table}

\noindent\textbf{Results on ImageNet Classification.}
Table~\ref{tab:prediction_certification_imagenet} reports the prediction-wise certified robustness on ImageNet classification under Gaussian noise level $\sigma=0.5$.
For a fair comparison, both FS and RS are evaluated with the same GSB-enhanced feature encoders.
The results show that FS provides non-trivial prediction-level certified robustness and achieves comparable results with RS, which is specifically designed for classification certification.
This indicates that the proposed feature-wise certificate is not limited to preserving feature similarity but can also provide meaningful guarantees for downstream decisions.
Table~\ref{tab:empirical_cls} further evaluates empirical prediction accuracy under white-box feature-space attacks.
The results show that classification accuracy remains stable and high over a wide range of perturbation budgets, even when $\epsilon$ increases to $16$.
The certified--empirical comparison in Figure~\ref{fig:cert_vs_emp_cls} is consistent with this observation: the certified accuracy provides a conservative lower bound of the empirical accuracy.
The visualization of the adversarial images with different attack bounds is shown in Figure~\ref{fig:visualization of adv images 1}.

Comparing Table~\ref{tab:prediction_certification_imagenet} with Table~\ref{tab:protection_success_rate}, we observe that prediction-wise certification is stronger than feature-wise certification.
This comes from the geometry of the certification targets: feature-wise certification requires the representation to remain close to the clean feature, while prediction-wise certification only requires the class ranking induced by the prototypes to remain unchanged.
Thus, feature shifts that reduce clean-feature similarity may still be harmless if they do not cross the relevant decision boundary.
When the clean feature has a sufficient margin over competing prototypes, FS can therefore tolerate non-negligible representation drift while still certifying the top-1 prediction.
In this sense, prediction-wise certification benefits from both the Gaussian stability of the protected encoder and the prediction margin induced by the downstream classifier.

\begin{figure}[t]
    \centering
    \vspace{-1mm}
    \includegraphics[width=1\linewidth]{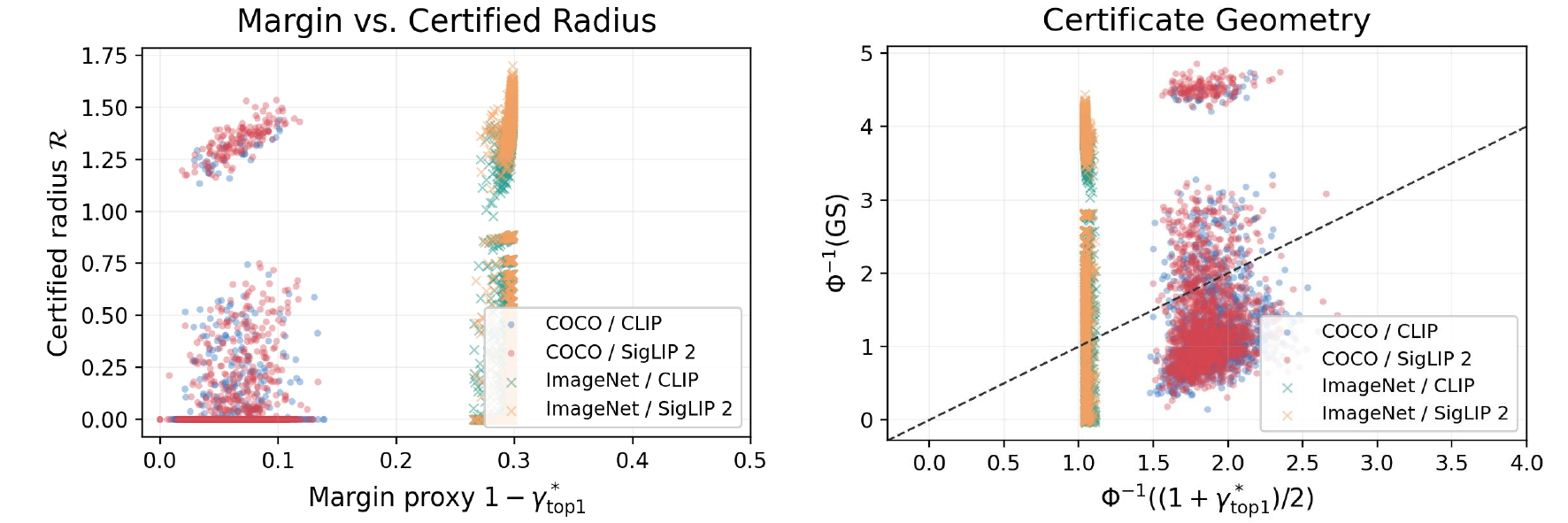}
    \vspace{-6mm}
\caption{
Certificate geometry of FS on ImageNet classification and COCO image--text retrieval.
The comparison shows that certified radii are governed by both Gaussian robustness and downstream decision margins; COCO retrieval exhibits tighter margins, leading to weaker certification.
}

    \vspace{-6mm}
    \label{fig:cert_cls_vs_ret}
\end{figure}

\noindent\textbf{Zero-shot results on MS COCO image--text retrieval.}
The results are shown in Table~\ref{tab:prediction_certification_coco}.
It indicates that FS provides non-trivial prediction-level certificates in the zero-shot retrieval setting, showing that feature-wise robustness can transfer to a challenging open-vocabulary task without training a task-specific retrieval head.
Meanwhile, SigLIP~2 consistently outperforms CLIP, achieving higher clean and certified retrieval accuracy, and average certified radius, suggesting that stronger Gaussian robustness and better vision--language alignment directly benefit certification.
We also observe that Top-5 retrieval is easier to certify than Top-1 retrieval, since preserving a correct caption within the top-5 set imposes a weaker ranking constraint than preserving the exact top-1 prediction.

Compared with ImageNet classification, the certified retrieval accuracy on COCO is lower.
This is mainly due to the intrinsic difficulty of zero-shot retrieval: COCO is certified directly in the frozen image--text embedding space, where candidate captions are dense and semantically overlapping.
As a result, the hardest negative caption can be close to the matched caption, producing a much tighter retrieval decision boundary than the learned ImageNet class prototypes.
Figure~\ref{fig:cert_cls_vs_ret} confirms this certificate geometry: COCO samples concentrate in the small-margin and high-threshold region, while ImageNet samples have substantially larger margins and lower decision thresholds.
Since the FS prediction-wise certificate depends jointly on Gaussian robustness and the downstream decision threshold, these smaller retrieval margins naturally lead to smaller certified radii.

Overall, these results highlight FS as a flexible and interpretable framework for extending feature-wise certification beyond closed-set classification.
They further suggest a promising direction for enhancing certified robustness via FS in a broader vision--language applications: jointly improving Gaussian robustness, cross-modal alignment, and margin-aware decision geometry in the embedding space.

\begin{table*}[t]
\centering
\caption{Experimental results on adversarial robustness of different defense methods and MLLMs on image captioning tasks. Values in parentheses denote the average textual similarity measured by GPTScore. The overall best results are shown in \textbf{bold}, and the best results without smoothing are \underline{underlined}, highlighting the significant performance gain introduced by our proposed FS.}
\label{tab:results image caption}
\resizebox{0.98\linewidth}{!}{
\begin{tabular}{c|c|cc|ccc|ccc|ccc}
\toprule
\multirow{2}{*}{\textbf{Model}} &
\multirow{2}{*}{\textbf{Method}} &
\multicolumn{2}{c|}{\textbf{W/O Attack}} &
\multicolumn{3}{c|}{\textbf{M-Attack}~\cite{li2025frustratingly}} &
\multicolumn{3}{c|}{\textbf{FOA}~\cite{jia2025adversarial}} &
\multicolumn{3}{c}{\textbf{AttackVLM}~\cite{zhao2023evaluating}}
\\
\cmidrule(lr){3-4} \cmidrule(lr){5-7} \cmidrule(lr){8-10} \cmidrule(lr){11-13}
& &
\textbf{Cln. ACC$\uparrow$} &
\textbf{Gau. ACC$\uparrow$} &
\textbf{FCS$\uparrow$} & \textbf{ACC$\uparrow$} & \textbf{ASR$\downarrow$} &
\textbf{FCS$\uparrow$} & \textbf{ACC$\uparrow$} & \textbf{ASR$\downarrow$} &
\textbf{FCS$\uparrow$} & \textbf{ACC$\uparrow$} & \textbf{ASR$\downarrow$}
\\
\midrule

\multirow{6}{*}{\makecell{LLaVA-\\1.5-7B}}
& Org. & 92\% (0.752) & 69\% (0.560) & 0.385 & 1\% (0.06) & 93\% (0.54) & 0.388 & 1\% (0.049) & 94\% (0.578) & 0.430 & 3\% (0.089) & 88\% (0.490) \\
& Smoothed org. & 92\% (0.736) &  72\% (0.565) & 0.588 & {63\%} (0.515) & \textbf{3\% (0.027)} & 0.587 & \textbf{65\% (0.519)} & \textbf{6\% (0.028)} & 0.598 & \textbf{71\% (0.556)} & \textbf{4\% (0.03)} \\
\cmidrule(lr){2-13}
& FARE~\cite{schlarmann2024robust} & 85\% (0.695) & 62\%(0.492) & 0.588 & 44\% (0.409) & 24\% (0.097) & 0.504 & 19\% (0.197) & 51\% (0.222) & 0.499 & 18\% (0.190) & 44\% (0.221) \\
& Smoothed FARE & 85\% (0.681) & 73\%(0.607) & 0.687 & \textbf{64\%} (0.541) & 10\% (0.053) & 0.634 & 43\% (0.391) & 14\% (0.065) & 0.653 & 52\% (0.443) & 7\% (0.045) \\
\cmidrule(lr){2-13}
& TeCoA~\cite{maounderstanding} & 76\% (0.605) & 46\% (0.409)  & 0.720 & \underline{51\% (0.458)} & \underline{16\% (0.081)} & 0.674 & \underline{21\% (0.236)} & \underline{37\% (0.187)} & 0.606 & \underline{17\% (0.179)} & \underline{43\% (0.218)} \\
& Smoothed TeCoA & 75\% (0.600) & 63\% (0.521) & 0.801 & 61\% (0.510) & 17\% (0.081) & 0.769 & 39\% (0.364) & 10\% (0.0530) & 0.720 & 38\% (0.355) & 14\% (0.065) \\

\midrule

\multirow{6}{*}{\makecell{Open\\Flamingo\\-9B}}
& Org. & 81\% (0.622) & 55\% (0.456) & 0.351 & 1\% (0.089) & 86\% (0.583) & 0.347 & 1\% (0.091) & 87\% (0.569) & 0.442 & 16\% (0.222) & 59\% (0.379) \\
& Smoothed org. & 78\% (0.593) & 63\% (0.503) & 0.592 & \textbf{59\% (0.483)} & \textbf{3\% (0.081)} & 0.588 & \textbf{57\% (0.472)} & \textbf{1\% (0.066)} & 0.703 & \textbf{61\% (0.505)} & \textbf{0\% (0.051)} \\
\cmidrule(lr){2-13}
& FARE~\cite{schlarmann2024robust} & 75\% (0.587) &  47\% (0.380) & 0.588 & 36\% (0.407) & 17\% (0.145) & 0.504 & 20\% (0.249) & 40\% (0.240) & 0.499 & 15\% (0.243) & 40\% (0.247) \\
& Smoothed FARE & 69\% (0.556) & 65\% (0.513) & 0.782 & 48\% (0.462) & 5\% (0.062) & 0.768 & 35\% (0.354) & 9\% (0.102) & 0.754 & 40\% (0.404) & 5\% (0.086) \\
\cmidrule(lr){2-13}
& TeCoA~\cite{maounderstanding} & 65\% (0.540) & 30\% (0.278) & 0.720 & \underline{51\% (0.472)} & \underline{13\% (0.116)} & 0.674 & \underline{21\% (0.240)} & \underline{30\% (0.210)} & 0.606 & \underline{19\% (0.244)} & \underline{34\% (0.225)} \\
& Smoothed TeCoA & 62\% (0.500) &  53\% (0.445) & 0.835 & 48\% (0.464) & 3\% (0.067) & 0.818 & 39\% (0.371) & 5\% (0.088) & 0.805 & 35\% (0.341) & 8\% (0.079) \\

\bottomrule
\end{tabular}}
\end{table*}
\subsection{Ablation Study}
\begin{table}[t]
\renewcommand\arraystretch{1.1}
\vspace{-3mm}
  \centering
  \caption{The ablation study of FS on certified robustness.}
  \vspace{-2mm}
  \resizebox{0.45\textwidth}{!}{
  \begin{tabular}{c|c|c|cccc|c}
    \toprule
     \multirow{2}*{Smoothing}&\multirow{2}*{Encoder}& \multirow{2}*{$\sigma$}&  \multicolumn{4}{c|}{Certified Acc. (\%) at different $\epsilon$}&
     \multirow{2}*{\makecell{Avg. \\ \thead{$\mathcal{R}$}}}
    \\
    \cmidrule{4-7}
     & & & 0.25&0.50&0.75&1.00 & 
     \\
     \midrule
     \multirow{4}*{FS} & CLIP-L14 & \multirow{4}*{0.50} & 45.3 & 37.0 &25.3 &6.6&0.34
    \\
    & CLIP-L14+$\mathcal{M}$&  & 52.1 & 42.9 & 34.2 & 29.8 & 0.62
    \\
    & CLIP-L14+$\mathcal{P}$&  & 70.1 & 61.5& 52.1  & 46.7 & 0.73
    \\
    & CLIP-L14+$\mathcal{P}\&\mathcal{M}$& & 79.0 & 72.6 & 63.6 & 50.0 & 0.82
     \\  
     \bottomrule
  \end{tabular}}
  \vspace{-4mm}
  \label{tab:ablation}
\end{table}

To rigorously assess the contribution of the proposed FS and each module of GSB, including the denoiser $\mathcal{P}$ and the mapper $\mathcal{M}$, we adopt CLIP-L14 as the base feature encoder and conduct the image classification task following the design in Section~\ref{sec:feature-wise rob}.
We report the feature-wise certified accuracy at different certification radii and the average certified radius $\mathcal{R}$.
The results are summarized in Table~\ref{tab:ablation}.
Compared with vanilla FS, adding either the mapper $\mathcal{M}$ or the denoiser $\mathcal{P}$ consistently improves certified robustness, as reflected by higher certified accuracy across all radii and a larger average certified radius.
Combining both modules achieves the strongest performance, increasing the average certified radius from $0.34$ to $0.82$ and improving the certified accuracy at $\epsilon=1.00$ from $6.6\%$ to $50.0\%$.
These results demonstrate that $\mathcal{P}$ and $\mathcal{M}$ consistently enhance feature-space Gaussian robustness, thereby yielding stronger prediction-wise certified robustness.

\section{Experiments Results on Empirical Protection for MLLMs}
\label{sec: exp}
While end-to-end robustness certification for MLLMs is appealing and practically meaningful, smoothing the entire model and certifying its autoregressive predictions with a solvable head are both computationally expensive and challenging.
Alternatively, we seek the feature-wise certification of the visual encoder in MLLMs, and empirically evaluate the resulting task-level robustness under strong white-box adversarial attacks, providing a practical compromise between certification feasibility and effectiveness.

\subsection{Experimental Setup}
\noindent\textbf{Evaluated models and tasks.} 
Since the proposed FS requires access to the forward feature computation process of MLLMs, we primarily validate its effectiveness on open-sourced MLLMs, including LLaVA-V1.5-7B~\cite{liu2023visual,liu2024improved} and OpenFlamingo9B~\cite{awadalla2023openflamingo}.
%
We comprehensively assess the performance of plugging the FS under adversarial conditions across multiple downstream tasks, including:
\begin{itemize}
    \item \textbf{Image captioning:} Following~\cite{li2025frustratingly,jia2025adversarial}, we randomly take 100 images from the NIPS 2017 Adversarial Attacks and Defenses Competition dataset~\textsuperscript{\ref{foot:nips2017}} and ask the model to caption the image.
    \item \textbf{Image Classification:} We randomly sample 500 images from 10 classes in the ImageNet dataset~\cite{deng2009imagenet} and ask the model to classify the input.
    \item \textbf{Visual Question Answering (VQA):} We utilize 100 image-question pairs from the ScienceQA dataset~\cite{lu2022learn} and ask the model to select the answer.
\end{itemize}
\footnotetext[\value{footnote}]{\label{foot:nips2017}\url{https://nips.cc/Conferences/2017/CompetitionTrack}}

\noindent\textbf{The threat model.} 
To comprehensively assess the robustness under strong adversaries, we employ three SOTA adversarial attacks specifically designed for MLLMs, named AttackVLM~\cite{zhao2023evaluating}, M-Attack~\cite{li2025frustratingly}, and FOA~\cite{jia2025adversarial} using \textbf{the white-box setting, where attackers can fully access the feature extractor and GSB}.
All attacks are implemented following their original best configurations, with the adversarial perturbation budget $\epsilon$ set to $\|\epsilon\|_{\infty}=16/255$.
%

\noindent\textbf{Compared defense methods.}
As robustness certification for MLLMs remains largely unexplored, we primarily compare our method against adversarial training–based defenses, specifically FARE~\cite{schlarmann2024robust} and TeCoA~\cite{maounderstanding}.
Both FARE and TeCoA adopt adversarial training to obtain robust feature encoders that can be directly integrated into models such as LLaVA-1.5-7B and OpenFlamingo-9B.
To ensure a fair and consistent comparison, we obtain all adversarially trained feature encoders from their official repositories, without any modification.

\noindent\textbf{Implementation details.} For practical inference efficiency, we set the number of samples to $n_0=8$ for smoothing, ensuring a favorable trade-off between robustness and runtime.
We train three independent GSB modules for different models, including the encoder of LLaVA-1.5-7B, the encoder of OpenFlamingo-9B, and CLIP-L14.
To further assess cross-model generalization, we directly utilize the GSB trained on a vanilla encoder on adversarially trained encoders, including FARE and TeCoA.
For all feature encoders, we set hyperparameters $\lambda_1,~\lambda_3 = 0.25$, $\lambda = 100$, and $\sigma=0.25$. 
In our tables, the term "smoothed" denotes the process of smoothing the encoder via FS and enhancing it with GSB. 

\noindent\textbf{Evaluation metrics.} We mainly report: the Feature Cosine Similarity ({FCS}), the Accuracy ({ACC}), and the Attack Success Rate ({ASR}).
Specifically, FCS measures the cosine similarity between clean and adversarial features extracted by the same feature encoder, reflecting its feature-wise robustness.
ACC denotes the proportion of correctly completed tasks, while ASR indicates the proportion of cases where the model is successfully manipulated to produce the adversarially targeted outputs.
For image classification and VQA tasks, ACC and ASR are determined by whether the MLLM outputs match the correct or adversarial targets.

For image captioning, following~\cite{li2025frustratingly}, we adopt the LLM-as-a-judge protocol~\cite{zheng2023judging} to evaluate ACC and ASR.
Specifically, we first generate the clean and adversarially targeted captions by feeding the clean and targeted inputs into the vanilla MLLM (\eg, the original LLaVA).
We then obtain the predicted caption by feeding the adversarially perturbed input into the tested model (\eg, Smoothed LLaVA).
The textual similarity is computed using GPTScore~\cite{fu2024gptscore}, where a task is considered successful if the GPTScore between the \textbf{predicted and clean captions is $\mathbf{\ge 0.5}$}, and an attack is considered successful if the GPTScore between the \textbf{predicted and adversarially targeted captions is $\mathbf{\ge 0.3}$}.

\begin{table*}[t]
\centering
\caption{Experimental results on adversarial robustness of different defense methods and MLLMs on image classification and VQA tasks. The overall best results are shown in \textbf{bold}, and the best results without smoothing are \underline{underlined}.}
\vspace{-2mm}
\label{tab:results_cls_vqa}
\resizebox{0.98\linewidth}{!}{
\begin{tabular}{c|c|c|cc|ccc|ccc|ccc}
\toprule
\multirow{2}{*}{\textbf{Model}} &
\multirow{2}{*}{\textbf{Task}} &
\multirow{2}{*}{\textbf{Method}} &
\multicolumn{2}{c|}{\textbf{W/O Attack}} &
\multicolumn{3}{c|}{\textbf{M-Attack}~\cite{li2025frustratingly}} &
\multicolumn{3}{c|}{\textbf{FOA}~\cite{jia2025adversarial}} &
\multicolumn{3}{c}{\textbf{AttackVLM}~\cite{zhao2023evaluating}}
\\
\cmidrule(lr){4-5} \cmidrule(lr){6-8} \cmidrule(lr){9-11} \cmidrule(lr){12-14}
& & &
\textbf{Cln. ACC$\uparrow$} & \textbf{Gau. ACC$\uparrow$} &
\textbf{FCS$\uparrow$} & \textbf{ACC$\uparrow$} & \textbf{ASR$\downarrow$} &
\textbf{FCS$\uparrow$} & \textbf{ACC$\uparrow$} & \textbf{ASR$\downarrow$} &
\textbf{FCS$\uparrow$} & \textbf{ACC$\uparrow$} & \textbf{ASR$\downarrow$} \\
\midrule

\multirow{12}{*}{\makecell{LLaVA-\\1.5-7B}}

& \multirow{6}{*}{\makecell{Image\\Classification}}
& Org. & 92.4\% &  80.4\%  & 0.427 & 8.2\% & 78.2\% & 0.437 & 3.8\% & 81.0\% & 0.458 & 6.0\% & 78.4\% \\
& & Smoothed org. & 92.4\% & 89.8\% & 0.590 & \textbf{84.8\%} & \textbf{0.2\%} & 0.600 & \textbf{87.2\%} & \textbf{0.2\%} & 0.605 & \textbf{87.0\%} & \textbf{0.2\%} \\
\cmidrule(lr){3-14}
& & FARE~\cite{schlarmann2024robust} & 93.4\% & 72.8\% & 0.574 & 55.4\% & 14.6\% & 0.521 & 24.8\% & 39.8\% & 0.508 & 25.8\% & 45.0\% \\
& & Smoothed FARE & 93.4\%  & 91.8\% & 0.695 & 75.6\% & 5.2\% & 0.681 & 66.2\% & 1.0\% & 0.671 & 68.0\% & 1.0\% \\
\cmidrule(lr){3-14}
& & TeCoA~\cite{maounderstanding} &  88.2\% & 59.2\% & 0.731 & \underline{67.4\%} & \underline{2.0\%} & 0.626 & \underline{29.0\%} & \underline{20.0\%} & 0.592 & \underline{33.6\%} & \underline{17.8\%} \\
& & Smoothed TeCoA & 88.2\% & 86.8\% & 0.790 & 70.6\% & 0.6\% & 0.714 & 58.8\% & 0.8\% & 0.712 & 56.4\% & 0.4\% \\

\cmidrule(lr){2-14}

& \multirow{6}{*}{VQA}
& Org. & 54.0\% & 44.0\% & 0.398 & 31.0\% & 28.0\% & 0.383 & 22.0\% & 22.0\% & 0.474 & 25.0\% & 27.0\% \\
& & Smoothed org. &54.0\%  & 47.0\% & 0.682 & 47.0\% & \textbf{0.0\%} & 0.643 & \textbf{43.0\%} & \textbf{0.0\%} & 0.712 & \textbf{43.0\%} & 1.0\% \\
\cmidrule(lr){3-14}
& & FARE~\cite{schlarmann2024robust} & 49.0\% & 38.0\% & 0.657 & \underline{47.0\%} & 5.0\% & 0.590 & \underline{32.0\%} & \underline{0.0\%} & 0.550 & \underline{38.0\%} & 7.0\% \\
& & Smoothed FARE & 50.0\% &43.0\% & 0.861 & \textbf{48.0\%} & 1.0\% & 0.788 & 38.0\% & 0.0\% & 0.825 & 39.0\% & \textbf{0.0\%} \\
\cmidrule(lr){3-14}
& & TeCoA~\cite{maounderstanding} & 38.0\% & 32.0\% & 0.788 & 31.0\% & \underline{2.0\%} & 0.748 & 31.0\% & 2.0\% & 0.667 & 31.0\% & 1.0\% \\
& & Smoothed TeCoA & 33.0\% & 35.0\% & 0.925 & 32.0\% & 1.0\% & 0.880 & 34.0\% & 1.0\% & 0.879 & 29.0\% & 1.0\% \\
\bottomrule
\end{tabular}}
\vspace{-3mm}
\end{table*}

\begin{table}[t]
\centering
\caption{Experimental results on adversarial robustness of different defense methods on image classification tasks with attack bound $\epsilon=32/255$.}
\vspace{-4mm}
\label{tab:results image classification large bound}
\resizebox{0.8\linewidth}{!}{
\begin{tabular}{c| c|ccc}
\toprule
\multirow{2}{*}{\textbf{Model}} & 
\multirow{2}{*}{\textbf{Method}} &\multicolumn{3}{c}{\textbf{FOA}~\cite{jia2025adversarial}} 
\\ 
\cmidrule(lr){3-5}
& & \textbf{FCS $\uparrow$} & \textbf{ACC$\uparrow$} & \textbf{ASR$\downarrow$} \\ 
\midrule
\multirow{6}{*}{\makecell{LLaVA-\\1.5-7B}} & Org. & 0.37 & 3.6\% & 78.6\% \\
&Smoothed org.& {0.512} & {35.2\%} & {24.2\%}  \\
\cmidrule(lr){2-5}
 & FARE~\cite{schlarmann2024robust} & 0.408 & 10.4\% & 64.2\% \\
& Smoothed FARE & 0.605 & \textbf{48.4\%} & 19.2\%  \\
\cmidrule(lr){2-5}
& TeCoA~\cite{maounderstanding} & 0.454 & \underline{12.0\%}  & \underline{29.6\%}  \\
& Smoothed TeCoA & 0.576 & 32.0\%  & \textbf{9.6\%}  \\
\bottomrule
\end{tabular}}
\vspace{-5mm}
\end{table}
\subsection{Results on Different Tasks}
\textbf{Image captioning.} Table~\ref{tab:results image caption} reports the image captioning performance of different MLLMs under clean inputs, Gaussian-corrupted inputs, and adversarial attacks.
The adversaries aim to manipulate the model into generating captions semantically aligned with a maliciously selected target image.
We report both caption accuracy and GPTScore-based textual similarity, where higher ACC and lower ASR indicate stronger robustness.
The results show that converting MLLMs into smoothed variants via the proposed FS smoothing yields \textbf{consistently strong robustness} across diverse attacks, whereas empirical defenses degrade substantially under stronger adversaries.
When attacks escalate from M-Attack to FOA, the accuracy of \textbf{LLaVA with FARE drops from 44\% to 19\%, and that of TeCoA from 51\% to 21\%}. In contrast, FS achieves \textbf{significant robustness gains}, increasing LLaVA accuracy from \textbf{1\% to 65\%} and reducing the ASR from \textbf{94\% to 6\%} under the strongest FOA attack. 
Moreover, FS exhibits strong cross-model generalization, consistently improving the robustness of FARE and TeCoA without additional fine-tuning.
The improvements are especially pronounced under stronger attacks, suggesting that feature-space smoothing provides a more reliable defense than relying solely on empirically robust vision encoders.

\noindent\textbf{Image classification.} 
Table~\ref{tab:results_cls_vqa} presents the classification results, indicating that FS delivers consistent and substantial robustness improvements across all settings.
The adversarial objective is to mislead the model into classifying an adversarial image into a maliciously targeted class.
integrating FS into the vanilla model increases \textbf{LLaVA accuracy from 3.8\% to 87.2\%} and reduces the ASR from \textbf{81\% to 0.2\%} under the strongest FOA attack. 
Similar robustness gains are observed when FS is applied to FARE and TecoA.
Notably, existing MLLMs remain vulnerable to even benign Gaussian perturbations, leading to a clear degradation in clean-image classification accuracy. Specifically, Gaussian noise reduces the accuracy of the vanilla, FARE, and TeCoA encoders from 92.4\%, 93.4\%, and 88.2\% to 80.4\%, 72.8\%, and 59.2\%, respectively. 
By incorporating FS, the models not only achieve stronger adversarial robustness, but also recover much of the performance loss under Gaussian corruption.

\noindent\textbf{VQA.}
The comparative results of different defense methods on LLaVA-1.5-7B on the VQA are presented in Table~\ref{tab:results_cls_vqa}.
%
%
In this setting, the adversarial objective is to mislead the model into selecting a wrong option, \textit{"None of the above"}, for each image–question pair.
This task presents a greater challenge for pure vision-based adversaries than the previous two, as MLLMs can often infer correct answers directly from textual cues.
The results demonstrate that incorporating FS consistently yields {substantial performance improvements} across all attack types, significantly improving prediction accuracy while driving the ASR to nearly zero.
Importantly, adversarially trained robust encoders can compromise utility on challenging tasks such as VQA: FARE reduces clean accuracy from 54.0\% to 49.0\%, and TeCoA further drops it to 38.0\%. In contrast, FS preserves the vanilla model's clean accuracy at 54.0\% and improves its Gaussian-noisy accuracy to 47.0\%, achieving robustness gains without substantial clean-performance degradation.


%
\noindent\textbf{Reasons for the poor performance of FARE and TeCoA despite high FCS.}
Although these encoders preserve high FCS relative to their clean inputs, adversarial training inherently induces a shift in the feature-space distribution. 
Subsequent adversarial perturbations and the limited diversity of adversarial training data further exacerbate this mismatch, leading to pronounced performance degradation on downstream tasks with unseen data distributions.
\begin{table*}[t]
\centering
\caption{Ablation study of GSB components on LLaVA-1.5-7B for image classification under different adversarial attacks. All smoothed variants use Gaussian smoothing with $\sigma=0.25$.}
\vspace{-2mm}
\label{tab:ablation_gsb_cls}
\resizebox{0.92\linewidth}{!}{
\begin{tabular}{c|cc|ccc|ccc|ccc}
\toprule
\multirow{2}{*}{\textbf{Smoothing}} &
\multirow{2}{*}{\textbf{Denoiser}} &
\multirow{2}{*}{\textbf{Mapper}} &
\multicolumn{3}{c|}{\textbf{M-Attack}~\cite{li2025frustratingly}} &
\multicolumn{3}{c|}{\textbf{FOA}~\cite{jia2025adversarial}} &
\multicolumn{3}{c}{\textbf{AttackVLM}~\cite{zhao2023evaluating}}
\\
\cmidrule(lr){4-6} \cmidrule(lr){7-9} \cmidrule(lr){10-12}
& & &
\textbf{FCS$\uparrow$} & \textbf{ACC$\uparrow$} & \textbf{ASR$\downarrow$} &
\textbf{FCS$\uparrow$} & \textbf{ACC$\uparrow$} & \textbf{ASR$\downarrow$} &
\textbf{FCS$\uparrow$} & \textbf{ACC$\uparrow$} & \textbf{ASR$\downarrow$}
\\
\midrule

w/o & \xmark & \xmark
& 0.427 & 8.2\% & 78.2\%
& 0.437 & 3.8\% & 81.0\%
& 0.458 & 6.0\% & 78.4\%
\\
\cmidrule(lr){1-12}

\multirow{4}{*}{FS} & \xmark & \xmark
& 0.527 & 66.2\% & 0.8\%
& 0.535 & 70.4\% & 0.6\%
& 0.534 & 69.4\% & 1.0\%
\\
 & \checkmark & \xmark
& 0.549 & 70.6\% & 1.0\%
& 0.561 & 76.2\% & 0.8\%
& 0.561 & 75.2\% & 0.4\%
\\

& \xmark & \checkmark
& 0.554 & 70.4\% & 1.2\%
& 0.560 & 74.2\% & 0.2\%
& 0.561 & 74.2\% & 0.8\%
\\

& \checkmark & \checkmark
& 0.590 & \textbf{84.8\%} & \textbf{0.2\%}
& 0.600 & \textbf{87.2\%} & \textbf{0.2\%}
& 0.605 & \textbf{87.0\%} & \textbf{0.2\%}
\\

\bottomrule
\end{tabular}}
\vspace{-3mm}
\end{table*}

\noindent\textbf{Experimental results on large attack bound}
The experimental results under a large adversarial perturbation of $\varepsilon = 32/255$ are presented in Table~\ref{tab:results image classification large bound}.
In addition, Figure~\ref{fig:visualization} provides visualizations of the adversarial examples generated by different attacks and perturbation bounds.
These results show that nearly all existing defenses show great performance drop under such a strong attack.
Under this challenging setting, the best accuracy is obtained by integrating FS with FARE, reaching 48.4\% on LLaVA, while the lowest ASR (9.6\%) is achieved by combining FS with TeCoA.
These findings further highlight the effectiveness and adaptability of the proposed framework, even against large-magnitude adversarial perturbations.

\begin{figure}[t]
    \centering
    \vspace{-1mm}
   \includegraphics[width=0.9\linewidth]{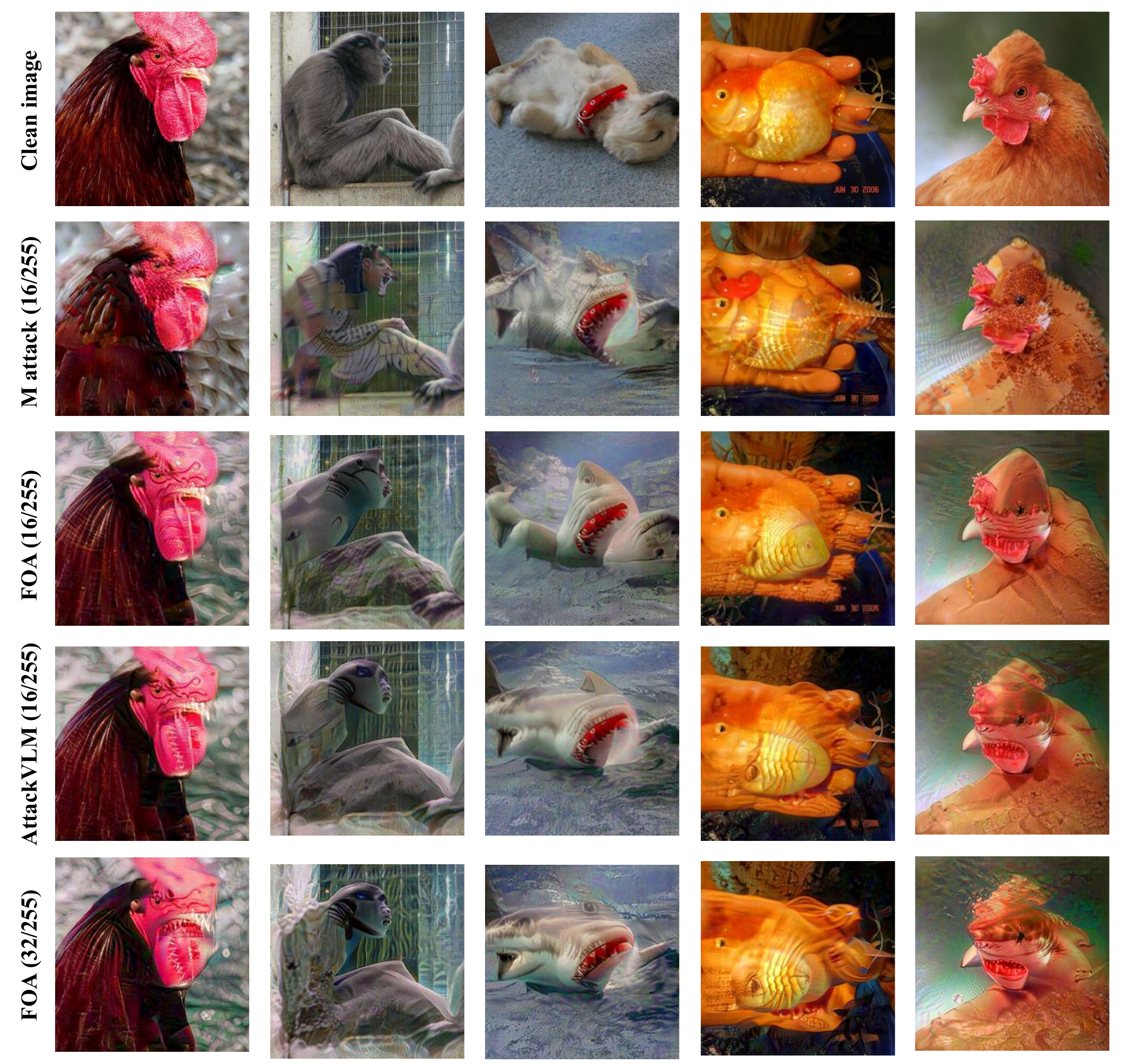}
   \caption{Visualization of the adversarial examples under different attack methods and bounds. The adversarial target is the shark class. }
   \label{fig:visualization}
    \vspace{-4mm}
\end{figure}
\subsection{Ablation Study and Analysis of Efficiency}

\textbf{Ablation:} Table~\ref{tab:ablation_gsb_cls} evaluates the contribution of each GSB component on LLaVA-1.5-7B image classification.
FS alone substantially improves robustness, increasing accuracy from 8.2\%, 3.8\%, and 6.0\% to 66.2\%, 70.4\%, and 69.4\% under M-Attack, FOA, and AttackVLM, respectively.
Adding the denoiser or mapper further improves accuracy, suggesting that both components help stabilize noisy visual features.
The full GSB consistently performs best, achieving 89.8\%, 87.2\%, and 87.0\% accuracy with only 0.2\% ASR across all attacks.
This confirms that the denoiser and mapper provide complementary performance gains.

\begin{table}[ht]
\centering
\vspace{-3mm}
\caption{The analysis of efficiency.}
\vspace{-2mm}
\renewcommand\arraystretch{1.1}
\resizebox{0.38\textwidth}{!}{\begin{tabular}{c|c|c|c}
\toprule
Model & $n_0$ & Avg. infer time (s) & FCS \\
\midrule
\multirow{4}{*}{\makecell{LLaVA-1.5\\7B}}
& 1  & 0.53 & 0.450 \\
& 4  & 0.64 & 0.562 \\
& 8  & 0.89 & 0.590 \\
& 64 & 4.04 & 0.609 \\
\bottomrule
\end{tabular}}
\vspace{-1mm}
\label{tab:efficiency}
\end{table}

\noindent\textbf{Efficiency analysis:}
Table~\ref{tab:efficiency} presents the efficiency analysis under varying numbers of Gaussian samples, evaluated on a single RTX A6000 GPU.
We consider the image captioning task and report the average per-image inference time together with the average FCS under M-Attack.
Compared to whole-model smoothing, feature-wise smoothing incurs lower inference overhead; notably, setting $n_0=8$ increases the per-image inference time of LLaVA by only $\sim$0.36 seconds, highlighting the practicality of FS for real-world deployment.
%
%

%

\section{Conclusion and Future Work.}
This work pioneers the study of feature-space certified robustness for vision-language applications.
We propose Feature-space Smoothing (FS), a general framework that transforms the original vision encoder into a smoothed variant, which is guaranteed to have a provable Feature Cosine Similarity Bound (FCSB) between clean and adversarial representations.
We further show that the certified feature robustness of FS is governed by the encoder's Gaussian robustness and how such feature-space guarantees can be translated into prediction-wise certified robustness under a cosine similarity measure.
Moreover, we introduce the Gaussian Smoothness Booster (GSB), a plug-and-play module that enhances feature-space Gaussian robustness while preserving feature utility, thereby improving both FCSB and prediction-wise robustness without modifying or re-aligning the backbone encoder.
Extensive experiments across diverse vision encoders and MLLMs demonstrate that FS provides non-trivial certified feature robustness and significantly improves empirical robustness against strong white-box attacks with modest inference overhead.

Despite these promising results, several limitations remain.
First, FS mainly provides certification under $\ell_2$-bounded perturbations; extending the framework to other threat models, such as $\ell_\infty$ perturbations or semantic attacks, is an important future direction.
Second, prediction-wise certification currently relies on cosine-similarity-based decision rules, which are well-suited for retrieval and prototype-based classification but may not directly cover all open-ended MLLM decoding behaviors.
Future work could therefore investigate more general certification mechanisms for autoregressive multimodal generation under broader classes of perturbation constraints.

\bibliographystyle{IEEEtran}
\bibliography{ref}

@String(CVPR= {IEEE Conf. Comput. Vis. Pattern Recog.})

@String(ICCV= {Int. Conf. Comput. Vis.})

@String(ECCV= {Eur. Conf. Comput. Vis.})

@String(NIPS= {Adv. Neural Inform. Process. Syst.})

@String(ACMMM= {ACM Int. Conf. Multimedia})

@String(ICLR = {Int. Conf. Learn. Represent.})

@String(ICML = {Int. Conf. Machine. Learning.})

@String(AAAI = {AAAI})

@String(CVPR  = {CVPR})

@String(icml  = {ICML})

@String(ICCV  = {ICCV})

@String(ECCV  = {ECCV})

@String(NIPS  = {NeurIPS})

@String(ACMMM = {ACM MM})

@String(ICLR  = {ICLR})

@misc{openai2026gpt54,
  author       = {OpenAI},
  title        = {GPT-5.4 System Card},
  year         = {2026},
  howpublished = {\url{https://openai.com/index/gpt-5-4-thinking-system-card/}}
}

@misc{deepmind2026gemini31,
  author       = {Google DeepMind},
  title        = {Gemini 3.1 Pro Model Card},
  year         = {2026},
  howpublished = {\url{https://deepmind.google/models/model-cards/gemini-3-1-pro/}}
}

@misc{anthropic2026claude46,
  author       = {Anthropic},
  title        = {Claude Sonnet 4.6 System Card},
  year         = {2026},
  howpublished = {\url{https://www.anthropic.com/system-cards}}
}

@article{jia2025adversarial,
  title={Adversarial Attacks against Closed-Source MLLMs via Feature Optimal Alignment},
  author={Jia, Xiaojun and Gao, Sensen and Qin, Simeng and Pang, Tianyu and Du, Chao and Huang, Yihao and Li, Xinfeng and Li, Yiming and Li, Bo and Liu, Yang},
  journal=NIPS,
  year={2025}
}

@inproceedings{li2025frustratingly,
  title={A Frustratingly Simple Yet Highly Effective Attack Baseline: Over 90\% Success Rate Against the Strong Black-box Models of GPT-4.5/4o/o1},
  author={Li, Zhaoyi and Zhao, Xiaohan and Wu, Dong-Dong and Cui, Jiacheng and Shen, Zhiqiang},
  booktitle={ICML 2025 Workshop on Reliable and Responsible Foundation Models},
  year={2025}
}

@inproceedings{cui2024robustness,
  title={On the robustness of large multimodal models against image adversarial attacks},
  author={Cui, Xuanming and Aparcedo, Alejandro and Jang, Young Kyun and Lim, Ser-Nam},
  booktitle=cvpr,
  year={2024}
}

@article{zhao2023evaluating,
  title={On evaluating adversarial robustness of large vision-language models},
  author={Zhao, Yunqing and Pang, Tianyu and Du, Chao and Yang, Xiao and Li, Chongxuan and Cheung, Ngai-Man Man and Lin, Min},
  journal=NIPS,
  year={2023}
}

@article{goodfellow2014explaining,
  title={Explaining and harnessing adversarial examples},
  author={Goodfellow, Ian J and Shlens, Jonathon and Szegedy, Christian},
  journal={arXiv preprint arXiv:1412.6572},
  year={2014}
}

@article{hein2017formal,
  title={Formal guarantees on the robustness of a classifier against adversarial manipulation},
  author={Hein, Matthias and Andriushchenko, Maksym},
  journal=nips,
  year={2017}
}

@inproceedings{cohen2019certified,
  title={Certified adversarial robustness via randomized smoothing},
  author={Cohen, Jeremy and Rosenfeld, Elan and Kolter, Zico},
  booktitle=icml,
  year={2019},
}

@inproceedings{xiamitigating,
  title={Mitigating the Curse of Dimensionality for Certified Robustness via Dual Randomized Smoothing},
  author={Xia, Song and Yu, Yi and Jiang, Xudong and Ding, Henghui},
  booktitle=ICLR,
  year={2024},
}

@inproceedings{madry2018towards,
  title={Towards Deep Learning Models Resistant to Adversarial Attacks},
  author={Madry, Aleksander and Makelov, Aleksandar and Schmidt, Ludwig and Tsipras, Dimitris and Vladu, Adrian},
  booktitle=ICLR,
  year={2018}
}

@article{rebuffi2021data,
  title={Data augmentation can improve robustness},
  author={Rebuffi, Sylvestre-Alvise and Gowal, Sven and Calian, Dan Andrei and Stimberg, Florian and Wiles, Olivia and Mann, Timothy A},
  journal=NIPS,
  year={2021}
}

@inproceedings{nie2022diffusion,
  title={Diffusion Models for Adversarial Purification},
  author={Nie, Weili and Guo, Brandon and Huang, Yujia and Xiao, Chaowei and Vahdat, Arash and Anandkumar, Animashree},
  booktitle=ICML,
  year={2022},
}

@inproceedings{schlarmann2024robust,
  title={Robust CLIP: unsupervised adversarial fine-tuning of vision embeddings for robust large vision-language models},
  author={Schlarmann, Christian and Singh, Naman Deep and Croce, Francesco and Hein, Matthias},
  booktitle=ICML,
  year={2024}
}

@article{salman2019provably,
  title={Provably robust deep learning via adversarially trained smoothed classifiers},
  author={Salman, Hadi and Li, Jerry and Razenshteyn, Ilya and Zhang, Pengchuan and Zhang, Huan and Bubeck, Sebastien and Yang, Greg},
  journal=NIPS,
  year={2019}
}

@inproceedings{wang2024revisiting,
  title={Revisiting adversarial training at scale},
  author={Wang, Zeyu and Li, Xianhang and Zhu, Hongru and Xie, Cihang},
  booktitle=CVPR,
  year={2024}
}

@article{xhonneux2024efficient,
  title={Efficient adversarial training in llms with continuous attacks},
  author={Xhonneux, Sophie and Sordoni, Alessandro and G{\"u}nnemann, Stephan and Gidel, Gauthier and Schwinn, Leo},
  journal=NIPS,
  year={2024}
}

@article{casper2024defending,
  title={Defending against unforeseen failure modes with latent adversarial training},
  author={Casper, Stephen and Schulze, Lennart and Patel, Oam and Hadfield-Menell, Dylan},
  journal={arXiv preprint arXiv:2403.05030},
  year={2024}
}

@inproceedings{lei2025instant,
  title={Instant adversarial purification with adversarial consistency distillation},
  author={Lei, Chun Tong and Yam, Hon Ming and Guo, Zhongliang and Qian, Yifei and Lau, Chun Pong},
  booktitle=CVPR,
  year={2025}
}

@inproceedings{
li2025adbm,
title={{ADBM}: Adversarial Diffusion Bridge Model for Reliable Adversarial Purification},
author={Xiao Li and Wenxuan Sun and Huanran Chen and Qiongxiu Li and Yingzhe He and Jie Shi and Xiaolin Hu},
booktitle=iclr,
year={2025},
}

@inproceedings{wong2018provable,
  title={Provable defenses against adversarial examples via the convex outer adversarial polytope},
  author={Wong, Eric and Kolter, Zico},
  booktitle=ICML, 
  year={2018},
}

@inproceedings{raghunathan2018certified,
  title={Certified Defenses against Adversarial Examples},
  author={Raghunathan, Aditi and Steinhardt, Jacob and Liang, Percy},
  booktitle=ICLR,
  year={2018}
}

@inproceedings{hao2022gsmooth,
  title={GSmooth: Certified robustness against semantic transformations via generalized randomized smoothing},
  author={Hao, Zhongkai and Ying, Chengyang and Dong, Yinpeng and Su, Hang and Song, Jian and Zhu, Jun},
  booktitle=ICML,
  year={2022},
}

@inproceedings{kakizaki2023certified,
  title={Certified Defense for Content Based Image Retrieval},
  author={Kakizaki, Kazuya and Fukuchi, Kazuto and Sakuma, Jun},
  booktitle=CVPR,
  year={2023}
}

@article{malik2025robust,
  title={Robust-llava: On the effectiveness of large-scale robust image encoders for multi-modal large language models},
  author={Malik, Hashmat Shadab and Shamshad, Fahad and Naseer, Muzammal and Nandakumar, Karthik and Khan, Fahad and Khan, Salman},
  journal={arXiv preprint arXiv:2502.01576},
  year={2025}
}

@inproceedings{maounderstanding,
  title={Understanding Zero-shot Adversarial Robustness for Large-Scale Models},
  author={Mao, Chengzhi and Geng, Scott and Yang, Junfeng and Wang, Xin and Vondrick, Carl},
  booktitle=iclr,
  year={2023}
}

@inproceedings{wang2024drf,
  title={Drf: Improving certified robustness via distributional robustness framework},
  author={Wang, Zekai and Zhou, Zhengyu and Liu, Weiwei},
  booktitle=AAAI,
  year={2024}
}

@article{xia2024transferable,
  title={Transferable adversarial attacks on sam and its downstream models},
  author={Xia, Song and Yang, Wenhan and Yu, Yi and Lin, Xun and Ding, Henghui and Duan, Lingyu and Jiang, Xudong},
  journal=NIPS,
  year={2024}
}

@article{li2023improving,
  title={Improving adversarial transferability via intermediate-level perturbation decay},
  author={Li, Qizhang and Guo, Yiwen and Zuo, Wangmeng and Chen, Hao},
  journal=NIPS,
  year={2023}
}

@inproceedings{huang2019enhancing,
  title={Enhancing adversarial example transferability with an intermediate level attack},
  author={Huang, Qian and Katsman, Isay and He, Horace and Gu, Zeqi and Belongie, Serge and Lim, Ser-Nam},
  booktitle=iccv,
  year={2019}
}

@inproceedings{li2020yet,
  title={Yet another intermediate-level attack},
  author={Li, Qizhang and Guo, Yiwen and Chen, Hao},
  booktitle=eccv,
  year={2020}
}

@inproceedings{ding2024transferable,
  title={Transferable adversarial attacks for object detection using object-aware significant feature distortion},
  author={Ding, Xinlong and Chen, Jiansheng and Yu, Hongwei and Shang, Yu and Qin, Yining and Ma, Huimin},
  booktitle=AAAI,
  year={2024}
}

@inproceedings{lecuyer2019certified,
  title={Certified robustness to adversarial examples with differential privacy},
  author={Lecuyer, Mathias and Atlidakis, Vaggelis and Geambasu, Roxana and Hsu, Daniel and Jana, Suman},
  booktitle={2019 IEEE symposium on security and privacy (SP)},
  year={2019},
}

@article{li2018second,
  title={Second-order adversarial attack and certifiable robustness},
  author={Li, B and Chen, C and Wang, W and Carin, L},
  journal={arXiv preprint arXiv: 1809.03113},
  year={2018}
}

@inproceedings{deng2009imagenet,
  title={Imagenet: A large-scale hierarchical image database},
  author={Deng, Jia and Dong, Wei and Socher, Richard and Li, Li-Jia and Li, Kai and Fei-Fei, Li},
  booktitle=cvpr,
  year={2009},
}

@inproceedings{radford2021learning,
  title={Learning transferable visual models from natural language supervision},
  author={Radford, Alec and Kim, Jong Wook and Hallacy, Chris and Ramesh, Aditya and Goh, Gabriel and Agarwal, Sandhini and Sastry, Girish and Askell, Amanda and Mishkin, Pamela and Clark, Jack and others},
  booktitle=icml,
  year={2021},
}

@article{liu2023visual,
  title={Visual instruction tuning},
  author={Liu, Haotian and Li, Chunyuan and Wu, Qingyang and Lee, Yong Jae},
  journal=nips,
  year={2023}
}

@inproceedings{liu2024improved,
  title={Improved baselines with visual instruction tuning},
  author={Liu, Haotian and Li, Chunyuan and Li, Yuheng and Lee, Yong Jae},
  booktitle=cvpr,
  year={2024}
}

@article{lu2022learn,
  title={Learn to explain: Multimodal reasoning via thought chains for science question answering},
  author={Lu, Pan and Mishra, Swaroop and Xia, Tanglin and Qiu, Liang and Chang, Kai-Wei and Zhu, Song-Chun and Tafjord, Oyvind and Clark, Peter and Kalyan, Ashwin},
  journal=nips,
  year={2022}
}

@article{awadalla2023openflamingo,
  title={Openflamingo: An open-source framework for training large autoregressive vision-language models},
  author={Awadalla, Anas and Gao, Irena and Gardner, Josh and Hessel, Jack and Hanafy, Yusuf and Zhu, Wanrong and Marathe, Kalyani and Bitton, Yonatan and Gadre, Samir and Sagawa, Shiori and others},
  journal={arXiv preprint arXiv:2308.01390},
  year={2023}
}

@article{zheng2023judging,
  title={Judging llm-as-a-judge with mt-bench and chatbot arena},
  author={Zheng, Lianmin and Chiang, Wei-Lin and Sheng, Ying and Zhuang, Siyuan and Wu, Zhanghao and Zhuang, Yonghao and Lin, Zi and Li, Zhuohan and Li, Dacheng and Xing, Eric and others},
  journal=nips,
  year={2023}
}

@inproceedings{fu2024gptscore,
  title={GPTScore: Evaluate as You Desire},
  author={Fu, Jinlan and Ng, See Kiong and Jiang, Zhengbao and Liu, Pengfei},
  booktitle={Proceedings of the 2024 Conference of the North American Chapter of the Association for Computational Linguistics: Human Language Technologies},
  year={2024}
}

@inproceedings{qi2024visual,
  title={Visual adversarial examples jailbreak aligned large language models},
  author={Qi, Xiangyu and Huang, Kaixuan and Panda, Ashwinee and Henderson, Peter and Wang, Mengdi and Mittal, Prateek},
  booktitle=AAAI,
  year={2024}
}

@inproceedings{wang2024break,
  title={Break the visual perception: Adversarial attacks targeting encoded visual tokens of large vision-language models},
  author={Wang, Yubo and Liu, Chaohu and Qu, Yanqiu and Cao, Haoyu and Jiang, Deqiang and Xu, Linli},
  booktitle=ACMMM,
  year={2024}
}

@inproceedings{zhang2024adversarial,
  title={Adversarial prompt tuning for vision-language models},
  author={Zhang, Jiaming and Ma, Xingjun and Wang, Xin and Qiu, Lingyu and Wang, Jiaqi and Jiang, Yu-Gang and Sang, Jitao},
  booktitle=eccv,
  year={2024},
}

@inproceedings{xie2025chain,
  title={Chain of Attack: On the Robustness of Vision-Language Models Against Transfer-Based Adversarial Attacks},
  author={Xie, Peng and Bie, Yequan and Mao, Jianda and Song, Yangqiu and Wang, Yang and Chen, Hao and Chen, Kani},
  booktitle=cvpr,
  year={2025}
}

@inproceedings{zhang2025anyattack,
  title={AnyAttack: Towards Large-scale Self-supervised Adversarial Attacks on Vision-language Models},
  author={Zhang, Jiaming and Ye, Junhong and Ma, Xingjun and Li, Yige and Yang, Yunfan and Chen, Yunhao and Sang, Jitao and Yeung, Dit-Yan},
  booktitle=cvpr,
  year={2025}
}

@inproceedings{li2023blip,
  title={Blip-2: Bootstrapping language-image pre-training with frozen image encoders and large language models},
  author={Li, Junnan and Li, Dongxu and Savarese, Silvio and Hoi, Steven},
  booktitle=icml,
  year={2023},
}

@article{tramer2020adaptive,
  title={On adaptive attacks to adversarial example defenses},
  author={Tramer, Florian and Carlini, Nicholas and Brendel, Wieland and Madry, Aleksander},
  journal=nips,
  year={2020}
}

@inproceedings{chen2023adaptive,
  title={An adaptive model ensemble adversarial attack for boosting adversarial transferability},
  author={Chen, Bin and Yin, Jiali and Chen, Shukai and Chen, Bohao and Liu, Ximeng},
  booktitle=cvpr,
  year={2023}
}

@inproceedings{wu2020stronger,
  title={Stronger and faster wasserstein adversarial attacks},
  author={Wu, Kaiwen and Wang, Allen and Yu, Yaoliang},
  booktitle=icml,
  year={2020},
}

@inproceedings{yoon2021adversarial,
  title={Adversarial purification with score-based generative models},
  author={Yoon, Jongmin and Hwang, Sung Ju and Lee, Juho},
  booktitle=icml,
  pages={12062--12072},
  year={2021},
}

@inproceedings{lee2023robust,
  title={Robust evaluation of diffusion-based adversarial purification},
  author={Lee, Minjong and Kim, Dongwoo},
  booktitle=cvpr,
  year={2023}
}

@inproceedings{zollicoffer2025lorid,
  title={Lorid: Low-rank iterative diffusion for adversarial purification},
  author={Zollicoffer, Geigh and Vu, Minh N and Nebgen, Ben and Castorena, Juan and Alexandrov, Boian and Bhattarai, Manish},
  booktitle=AAAI,
  year={2025}
}

@inproceedings{olivier2023many,
  title={How many perturbations break this model? evaluating robustness beyond adversarial accuracy},
  author={Olivier, Raphael and Raj, Bhiksha},
  booktitle=icml,
  year={2023},
}

@inproceedings{shayeganijailbreak,
  title={Jailbreak in pieces: Compositional Adversarial Attacks on Multi-Modal Language Models},
  author={Shayegani, Erfan and Dong, Yue and Abu-Ghazaleh, Nael},
  booktitle=iclr,
  year={2024},
}

@inproceedings{li2024images,
  title={Images are achilles’ heel of alignment: Exploiting visual vulnerabilities for jailbreaking multimodal large language models},
  author={Li, Yifan and Guo, Hangyu and Zhou, Kun and Zhao, Wayne Xin and Wen, Ji-Rong},
  booktitle=eccv,
  pages={174--189},
  year={2024},
  organization={Springer}
}

@inproceedings{gu2024agent,
  title={Agent Smith: A Single Image Can Jailbreak One Million Multimodal LLM Agents Exponentially Fast},
  author={Gu, Xiangming and Zheng, Xiaosen and Pang, Tianyu and Du, Chao and Liu, Qian and Wang, Ye and Jiang, Jing and Lin, Min},
  booktitle=icml,
  pages={16647--16672},
  year={2024}
}

@inproceedings{wang2024adashield,
  title={Adashield: Safeguarding multimodal large language models from structure-based attack via adaptive shield prompting},
  author={Wang, Yu and Liu, Xiaogeng and Li, Yu and Chen, Muhao and Xiao, Chaowei},
  booktitle=eccv,
  pages={77--94},
  year={2024}
}

@inproceedings{oh2025uniguard,
  title={UniGuard: Towards Universal Safety Guardrails for Jailbreak Attacks on Multimodal Large Language Models},
  author={Oh, Sejoon and Jin, Yiqiao and Sharma, Megha and Kim, Donghyun and Ma, Eric and Verma, Gaurav and Kumar, Srijan},
  booktitle={ICML 2025 Workshop on Reliable and Responsible Foundation Models},
  year={2025}
}

@inproceedings{ghosal2025immune,
  title={Immune: Improving safety against jailbreaks in multi-modal llms via inference-time alignment},
  author={Ghosal, Soumya Suvra and Chakraborty, Souradip and Singh, Vaibhav and Guan, Tianrui and Wang, Mengdi and Beirami, Ahmad and Huang, Furong and Velasquez, Alvaro and Manocha, Dinesh and Bedi, Amrit Singh},
  booktitle=cvpr,
  pages={25038--25049},
  year={2025}
}

@inproceedings{wang2025steering,
  title={Steering away from harm: An adaptive approach to defending vision language model against jailbreaks},
  author={Wang, Han and Wang, Gang and Zhang, Huan},
  booktitle=cvpr,
  pages={29947--29957},
  year={2025}
}

@article{chen2025safeptr,
  title={SafePTR: Token-Level Jailbreak Defense in Multimodal LLMs via Prune-then-Restore Mechanism},
  author={Chen, Beitao and Lyu, Xinyu and Gao, Lianli and Song, Jingkuan and Shen, Heng Tao},
  journal={arXiv preprint arXiv:2507.01513},
  year={2025}
}

@inproceedings{zhang2024pip,
  title={Pip: Detecting adversarial examples in large vision-language models via attention patterns of irrelevant probe questions},
  author={Zhang, Yudong and Xie, Ruobing and Chen, Jiansheng and Sun, Xingwu and Wang, Yu},
  booktitle=acmmm,
  pages={11175--11183},
  year={2024}
}

@inproceedings{li2025attack,
  title={Attack as Defense: Safeguarding Large Vision-Language Models from Jailbreaking by Adversarial Attacks},
  author={Li, Chongxin and Wang, Hanzhang and Fang, Yuchun},
  booktitle={Findings of the Association for Computational Linguistics: EMNLP 2025},
  pages={20138--20152},
  year={2025}
}

@article{kumar2021center,
  title={Center smoothing: Certified robustness for networks with structured outputs},
  author={Kumar, Aounon and Goldstein, Tom},
  journal=nips,
  volume={34},
  pages={5560--5575},
  year={2021}
}

@article{chiang2020detection,
  title={Detection as regression: Certified object detection with median smoothing},
  author={Chiang, Ping-yeh and Curry, Michael and Abdelkader, Ahmed and Kumar, Aounon and Dickerson, John and Goldstein, Tom},
  journal=nips,
  volume={33},
  pages={1275--1286},
  year={2020}
}

@inproceedings{nirala2024fast,
  title={Fast Certification of Vision-Language Models Using Incremental Randomized Smoothing},
  author={Nirala, Ashutosh and Joshi, Ameya and Sarkar, Soumik and Hegde, Chinmay},
  booktitle={2024 IEEE Conference on Secure and Trustworthy Machine Learning (SaTML)},
  pages={252--271},
  year={2024},
  organization={IEEE}
}

@inproceedings{hussein2024promptsmooth,
  title={Promptsmooth: Certifying robustness of medical vision-language models via prompt learning},
  author={Hussein, Noor and Shamshad, Fahad and Naseer, Muzammal and Nandakumar, Karthik},
  booktitle={International Conference on Medical Image Computing and Computer-Assisted Intervention},
  pages={698--708},
  year={2024},
  organization={Springer}
}

@inproceedings{seferis2025randomized,
  title={Randomized Smoothing Meets Vision-Language Models},
  author={Seferis, Emmanouil and Wu, Changshun and Kollias, Stefanos and Bensalem, Saddek and Cheng, Chih-Hong},
  booktitle={Proceedings of the 2025 Conference on Empirical Methods in Natural Language Processing},
  pages={27456--27466},
  year={2025}
}

@article{li2025odysseus,
  title={Odysseus: Jailbreaking Commercial Multimodal LLM-integrated Systems via Dual Steganography},
  author={Li, Songze and Cheng, Jiameng and Li, Yiming and Jia, Xiaojun and Tao, Dacheng},
  journal={arXiv preprint arXiv:2512.20168},
  year={2025}
}

@article{zhao2025vrsa,
  title={VRSA: Jailbreaking Multimodal Large Language Models through Visual Reasoning Sequential Attack},
  author={Zhao, Shiji and Xiong, Shukun and Huang, Yao and Jin, Yan and Wu, Zhenyu and Guan, Jiyang and Duan, Ranjie and Tao, Jialing and Xue, Hui and Wei, Xingxing},
  journal={arXiv preprint arXiv:2512.05853},
  year={2025}
}

@article{fang2025safemlrm,
  title={Safemlrm: Demystifying safety in multi-modal large reasoning models},
  author={Fang, Junfeng and Wang, Yukai and Wang, Ruipeng and Yao, Zijun and Wang, Kun and Zhang, An and Wang, Xiang and Chua, Tat-Seng},
  journal={arXiv preprint arXiv:2504.08813},
  year={2025}
}

@article{tschannen2025siglip,
  title={Siglip 2: Multilingual vision-language encoders with improved semantic understanding, localization, and dense features},
  author={Tschannen, Michael and Gritsenko, Alexey and Wang, Xiao and Naeem, Muhammad Ferjad and Alabdulmohsin, Ibrahim and Parthasarathy, Nikhil and Evans, Talfan and Beyer, Lucas and Xia, Ye and Mustafa, Basil and others},
  journal={arXiv preprint arXiv:2502.14786},
  year={2025}
}

@article{sun2023eva,
  title={Eva-clip: Improved training techniques for clip at scale},
  author={Sun, Quan and Fang, Yuxin and Wu, Ledell and Wang, Xinlong and Cao, Yue},
  journal={arXiv preprint arXiv:2303.15389},
  year={2023}
}

@inproceedings{lin2014microsoft,
  title={Microsoft coco: Common objects in context},
  author={Lin, Tsung-Yi and Maire, Michael and Belongie, Serge and Hays, James and Perona, Pietro and Ramanan, Deva and Doll{\'a}r, Piotr and Zitnick, C Lawrence},
  booktitle=eccv,
  pages={740--755},
  year={2014},
  organization={Springer}
}

@article{wei2024physical,
  title={Physical adversarial attack meets computer vision: A decade survey},
  author={Wei, Hui and Tang, Hao and Jia, Xuemei and Wang, Zhixiang and Yu, Hanxun and Li, Zhubo and Satoh, Shin’ichi and Van Gool, Luc and Wang, Zheng},
  journal={IEEE Transactions on Pattern Analysis and Machine Intelligence},
  volume={46},
  number={12},
  pages={9797--9817},
  year={2024},
  publisher={IEEE}
}

@article{wei2023adaptive,
  title={Adaptive cross-modal transferable adversarial attacks from images to videos},
  author={Wei, Zhipeng and Chen, Jingjing and Wu, Zuxuan and Jiang, Yu-Gang},
  journal={IEEE Transactions on Pattern Analysis and Machine Intelligence},
  volume={46},
  number={5},
  pages={3772--3783},
  year={2023},
  publisher={IEEE}
}

@article{angioni2025robustness,
  title={Robustness-congruent adversarial training for secure machine learning model updates},
  author={Angioni, Daniele and Demetrio, Luca and Pintor, Maura and Oneto, Luca and Anguita, Davide and Biggio, Battista and Roli, Fabio},
  journal={IEEE Transactions on Pattern Analysis and Machine Intelligence},
  year={2025},
  publisher={IEEE}
}

@article{ding2026transferable,
  title={Transferable Adversarial Attack on Referring Video Object Segmentation},
  author={Ding, Meiwen and Xia, Song and Yu, Yi and He, Shuting and Jiang, Xudong},
  journal={IEEE Transactions on Information Forensics and Security},
  year={2026},
  publisher={IEEE}
}

@article{li2023recognizing,
  title={Recognizing object by components with human prior knowledge enhances adversarial robustness of deep neural networks},
  author={Li, Xiao and Wang, Ziqi and Zhang, Bo and Sun, Fuchun and Hu, Xiaolin},
  journal={IEEE Transactions on Pattern Analysis and Machine Intelligence},
  volume={45},
  number={7},
  pages={8861--8873},
  year={2023},
  publisher={IEEE}
}

@article{zhang2026nap,
  title={NAP-Tuning: Neural Augmented Prompt Tuning for Adversarially Robust Vision-Language Models},
  author={Zhang, Jiaming and Wang, Xin and Ma, Xingjun and Qiu, Lingyu and Jiang, Yu-Gang and Sang, Jitao},
  journal={IEEE Transactions on Pattern Analysis and Machine Intelligence},
  year={2026},
  publisher={IEEE}
}

@article{liu2026sega,
  title={SEGA: A Transferable Signed Ensemble Gaussian Black-Box Attack Against No-Reference Image Quality Assessment Models},
  author={Liu, Yujia and Li, Dingquan and Li, Zhixuan and Huang, Tiejun},
  journal={IEEE Transactions on Pattern Analysis and Machine Intelligence},
  year={2026},
  publisher={IEEE}
}

@article{wei2026enhancing,
  title={Enhancing Adversarial Transferability with Cost-efficient Landscape Flattening},
  author={Wei, Zhipeng and Chen, Jingjing and Han, Feng and Yu, Yue and Jiang, Yu-Gang},
  journal={IEEE Transactions on Pattern Analysis and Machine Intelligence},
  year={2026},
  publisher={IEEE}
}

@article{hu2025dynamicpae,
  title={DynamicPAE: Generating Scene-Aware Physical Adversarial Examples in Real-Time},
  author={Hu, Jin and Liu, Xianglong and Wang, Jiakai and Zhang, Junkai and Yang, Xianqi and Qin, Haotong and Ma, Yuqing and Xu, Ke},
  journal={IEEE Transactions on Pattern Analysis and Machine Intelligence},
  year={2025},
  publisher={IEEE}
}

@article{lau2023interpolated,
  title={Interpolated joint space adversarial training for robust and generalizable defenses},
  author={Lau, Chun Pong and Liu, Jiang and Souri, Hossein and Lin, Wei-An and Feizi, Soheil and Chellappa, Rama},
  journal={IEEE Transactions on Pattern Analysis and Machine Intelligence},
  volume={45},
  number={11},
  pages={13054--13067},
  year={2023},
  publisher={IEEE}
}

@article{peck2023introduction,
  title={An introduction to adversarially robust deep learning},
  author={Peck, Jonathan and Goossens, Bart and Saeys, Yvan},
  journal={IEEE Transactions on Pattern Analysis and Machine Intelligence},
  volume={46},
  number={4},
  pages={2071--2090},
  year={2023},
  publisher={IEEE}
}
\begin{IEEEbiography}[{\includegraphics[width=1in,height=1.25in,clip,keepaspectratio]{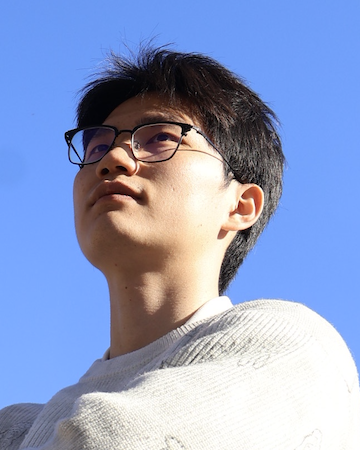}}]{Song Xia}
is currently a Ph.D. candidate in Electrical and Electronic Engineering at Nanyang Technological University (NTU), Singapore, under the supervision of Prof. Xudong Jiang in the Rapid-Rich Object Search (ROSE) Lab. His research focuses on trustworthy artificial intelligence, adversarial robustness, and secure multimodal large language models (MLLMs). His work aims to enhance the reliability, security, and robustness of modern AI systems under adversarial and real-world conditions.
\end{IEEEbiography}

\vspace{-4mm}
\begin{IEEEbiography}[{\includegraphics[width=1in,height=1.25in,clip,keepaspectratio]{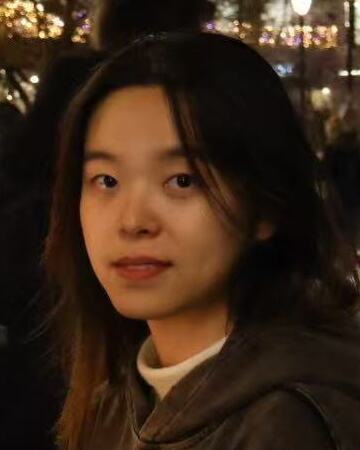}}]{Meiwen Ding}
received the B.Eng. degree from the School of Electrical and Electronic Engineering, Nanyang Technological University, Singapore, where she is currently pursuing the Ph.D. degree under the supervision of Prof. Xudong Jiang with the Rapid-Rich Object Search (ROSE) Lab. Her research interests include adversarial machine learning, the robustness of multimodal large language models, and vision-language understanding.
\end{IEEEbiography}

\begin{IEEEbiography}[{\includegraphics[width=1in,height=1.25in,clip,keepaspectratio]{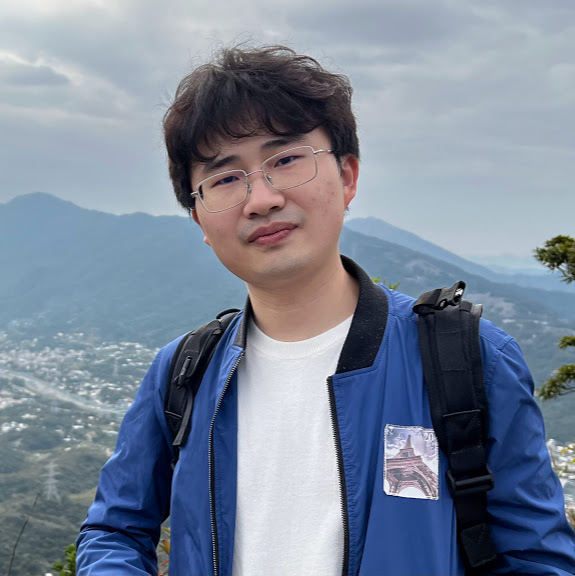}}]{Chenqi Kong} received the B.S. and M.S. degrees in the College of Science and the College of Electrical Engineering and Automation, Harbin Institute of Technology, Harbin, China, in 2017 and 2019, respectively. He received the Ph.D. degree in the Department of Computer Science, City University of Hong Kong, Hong Kong, China (Hong Kong SAR) in 2023. He is currently a research fellow in the School of Electrical and Electronic Engineering, Nanyang Technological University, Singapore. His research interests include AI security and multimedia forensics.
\end{IEEEbiography}

\begin{IEEEbiography}[{\includegraphics[width=1in,height=1.25in,clip,keepaspectratio]{./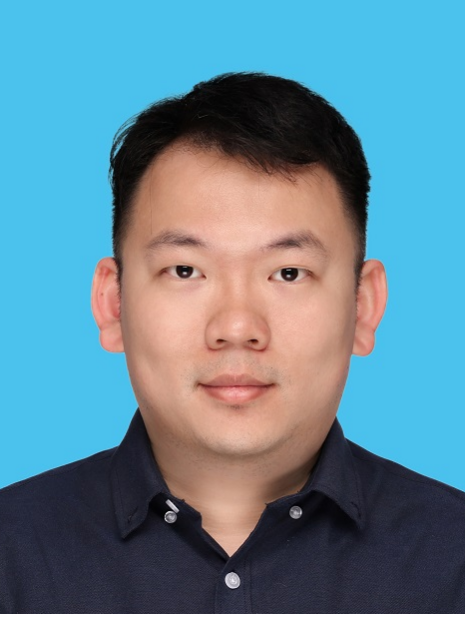}}]
		{Wenhan Yang} (Member, IEEE) 
		received the B.S degree and Ph.D. degree (Hons.) in computer science from Peking University, Beijing, China, in 2012 and 2018. He is currently an associate researcher with PengCheng Laboratory, Shenzhen, Guangdong, China. His current research interests include image/video processing/restoration, bad weather restoration, human-machine collaborative coding. He has authored over 50 technical articles in refereed journals and proceedings, and holds 9 granted patents. He received the 2023 IEEE Multimedia Rising Star Runner-Up Award, the IEEE ICME-2020 Best Paper Award, the IFTC 2017 Best Paper Award, the IEEE CVPR-2018 UG2 Challenge First Runner-up Award, and the MSA-TC Best Paper Award of ISCAS 2022. He was the Candidate of CSIG Best Doctoral Dissertation Award in 2019. He served as the Area Chair of IEEE ICME-2021/2022/2023/2024, the Session Chair of IEEE ICME-2021, and the Organizer of IEEE CVPR-2019/2020/2021 UG2+ Challenge and Workshop.
\end{IEEEbiography}

\begin{IEEEbiography}[{\includegraphics[width=1in,height=1.25in,clip,keepaspectratio]{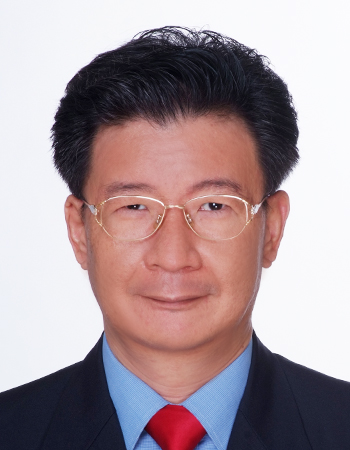}}]{Xudong Jiang}
(Fellow, IEEE) received the B.Eng. and M.Eng. degrees in electrical engineering from the University of Electronic Science and Technology of China (UESTC), Chengdu, China, in 1983 and 1986, respectively, and the Ph.D. degree in electrical engineering from Helmut Schmidt University, Hamburg, Germany, in 1997.,From 1998 to 2004, he was with the Institute for Infocomm Research, A*STAR, Singapore, as a Lead Scientist and the Head of the Biometrics Laboratory. In 2004, he joined Nanyang Technological University (NTU), Singapore, as a Faculty Member, where he served as the Director of the Centre for Information Security from 2005 to 2011. He is currently a Professor with the School of Electrical and Electronic Engineering (EEE), NTU, where he is also the Director of the Centre for Information Sciences and Systems. He holds seven patents and has authored more than 300 articles, where 43 articles were presented in top conferences CVPR/NeurIPS/ICCV/ECCV/AAAI and over 70 articles were published in the IEEE journals, with 14 articles in IEEE Transactions on Pattern Analysis and Machine Intelligence (TPAMI) and 20 articles in IEEE Transactions on Image Processing (TIP). His current research interests include computer vision, machine learning, image processing, pattern recognition, and biometrics.,Dr. Jiang served as an IFS TC Member for the IEEE Signal Processing Society from 2015 to 2017 and an Associate Editor for IEEE Signal Processing Letter from 2014 to 2018 and IEEE Transactions on Image Processing (TIP) from 2016 to 2020. He currently serves as a Senior Area Editor for IEEE TIP and the Editor-in-Chief for IET Biometrics.
\end{IEEEbiography}


\end{document}